\newtheorem{theorem}{Theorem}
\newtheorem{lemma}{Lemma}
\newtheorem{corollary}{Corollary}
\newtheorem{assumption}{Assumption}[section]
\crefname{assumption}{Assumption}{assumptions}
\Crefname{assumption}{Assumption}{Assumptions}
\newcommand{\F}{\mathcal{F}}
\newcommand{\defined}{\doteq}
\newcommand{\indicate}[1]{{\mathbb{I}\{{#1}\}}}
\newcommand{\ind}[1]{{\mathbb{I}\{{#1}\}}}
\newcommand{\one}[1]{{\mathbb{I}\{{#1}\}}}
\newcommand{\var}{\mathbb{V}} 
\renewcommand{\P}{\mathbb{P}} 
\newcommand{\Prob}[1]{\mathbb{P}(#1)} 
\newcommand{\Probs}[1]{\mathbb{P}\left(#1\right)} 
\newcommand{\MC}{Monte Carlo\xspace}
\newcommand{\A}{\mathcal{A}}
\renewcommand{\epsilon}{\varepsilon}
\newcommand{\eps}{\varepsilon}
\newcommand{\R}{\mathbb{R}}
\newcommand{\real}{\mathbb{R}}
\newcommand{\N}{\mathbb{N}}
\newcommand{\E}{\mathbb{E}}
\newcommand{\EE}[1]{\E\left[#1\right]}
\newcommand{\V}{\var}
\newcommand{\Var}[1]{\V\left(#1\right)}
\renewcommand{\A}{\mathcal{A}}
\newcommand{\B}{\mathcal{B}}
\newcommand{\hmu}{\hat{\mu}}
\newcommand{\oX}{\overline{X}}
\newcommand{\ra}{\rightarrow}
\newcommand{\FF}{\mathcal{F}}
\newcommand{\bvar}{\bar{\var}}
\DeclareMathOperator{\KLname}{KL}
\newcommand{\KL}[1]{\KLname\left(#1\right)}
\DeclareMathOperator{\BanditAlg}{Bandit}
\DeclareMathOperator{\MCAlg}{MC}
\newcommand{\cset}[2]{\left\{#1\,:\,#2\right\}}
\icmltitlerunning{Adaptive \MC via Bandit Allocation}
\begin{document} 

\twocolumn[
\icmltitle{Adaptive \MC via Bandit Allocation}

\icmlauthor{\vspace{-0.15cm}}{\vspace{-0.15cm}}
\icmlauthor{James Neufeld}{jneufeld@ualberta.ca}
\icmlauthor{Andr\'as Gy\"orgy}{gyorgy@ualberta.ca}
\icmlauthor{Dale Schuurmans}{daes@ualberta.ca}
\icmlauthor{Csaba Szepesv\'ari}{csaba.szepesvari@ualberta.ca}
\icmladdress{Department of Computing Science, University of Alberta, Edmonton, AB, Canada T6G 2E8}
\icmlkeywords{Online Learning, multi-armed bandit, Monte Carlo, Regret Analysis}
]

\begin{abstract}

We consider the problem of sequentially choosing between a set of
unbiased \MC estimators to minimize the mean-squared-error (MSE) of a
final combined estimate. 
By reducing this task to a \emph{stochastic multi-armed bandit} problem,
we show that well developed allocation strategies can be used to achieve 
an MSE that approaches that of the best estimator chosen in retrospect.
We then extend these developments to a scenario where alternative estimators 
have different, possibly stochastic costs.
The outcome is a new set of adaptive \MC strategies that provide stronger
guarantees than previous approaches while offering practical advantages.

\end{abstract}

\section{Introduction}

\MC methods are a pervasive approach to approximating complex integrals,
which are widely deployed in all areas of science.
Their widespread adoption has led to the development dozens
of specialized \MC methods for any given task, 
each having their own tunable parameters.  
Consequently, it is usually difficult for a practitioner to know which 
approach and corresponding parameter setting
might be most effective for a given problem.

In this paper we develop algorithms for sequentially allocating calls
between a set of unbiased estimators to minimize the expected squared error 
(MSE) of a combined estimate.  
In particular, we formalize a new class of adaptive estimation problem:
\emph{learning to combine \MC estimators}.  
In this scenario, one is given a set of \MC estimators that can each 
approximate the expectation of some function of interest.  
We assume initially that each estimator is unbiased 
but has \emph{unknown} variance. 
In practice, such estimators could include any unbiased
method and/or variance reduction technique, such as unique instantiations
of importance, stratified, or rejection sampling; antithetic variates;
or control variates \citep{Robert2005}.
The problem is to design a sequential allocation procedure that can
interleave calls to the estimators and combine their outputs
to produce a combined estimate whose MSE decreases as quickly as possible.
To analyze the performance of such a
meta-strategy we formalize the notion of \emph{MSE-regret}: the
time-normalized excess MSE of the combined estimate compared to the best 
estimator selected in hindsight, 
i.e., with knowledge of the distribution of estimates produced by each
base estimator.

Our first main contribution is to show that this meta-task can be
reduced to a \emph{stochastic multi-armed bandit problem}, where
bandit arms are identified with base estimators and the payoff of an
arm is given by the negative square of its sampled estimate.  In
particular, we show that the MSE-regret of \emph{any} meta-strategy is
equal to its bandit-regret when the procedure is used to play in the
corresponding bandit problem.
As a consequence, we conclude that existing bandit algorithms, 
as well as their bounds on bandit-regret, can be
immediately applied to achieve new results for adaptive \MC
estimation.  Although the underlying reduction is quite simple, the
resulting adaptive allocation strategies provide novel 
alternatives to traditional \emph{adaptive} \MC strategies,
while providing strong finite-sample performance guarantees.

Second, we consider a more general case where the alternative
estimators require different (possibly random) costs to
produce their sampled estimates.  Here we develop a suitably designed
bandit formulation that yields bounds on the MSE-regret for cost-aware
estimation.  
We develop new algorithms for this generalized form of adaptive \MC,
provide explicit bounds on their MSE-regret,
and compare their performance to a state-of-the-art adaptive \MC method.
By instantiating a set of viable base estimators and selecting from
them algorithmically, rather than tuning parameters manually, we
discover that both computation and experimentation time can be reduced.

This work is closely related, and complementary to
work on adaptive stratified sampling  \citep{CaMu11},
where a strategy is designed to allocate samples between fixed strata 
to achieve MSE-regret bounds relative to the best allocation proportion
chosen in hindsight.  
Such work has since been extended to optimizing the number \citep{CaMu12:ALT} 
and structure \citep{CaMu12:NIPS} of strata for differentiable functions.  
The method proposed in this
paper, however, can be applied more broadly to any set of base estimation 
strategies and potentially even in combination with these approaches.


\section{Background on Bandit Problems}
\label{sec:background}


The multi-armed bandit (MAB) problem is a sequential allocation task
where an agent must choose an action at each step to maximize
its long term payoff, when only the payoff of the selected action can
be observed \citep{cesa06, bubeck2012regret}.
%
%
In the \emph{stochastic} MAB problem \citep{robbins1952} 
the payoff for each action $k \in \{1,2,\ldots,K\}$ 
is assumed to be generated independently and identically (i.i.d.)  
from a fixed but unknown distribution $\nu_k$.
The performance of an allocation policy can then by analyzed by
defining the \emph{cumulative regret} for any sequence of $n$ actions,
given by
%
%
\vspace{-0.15cm}
\begin{equation}
R_n \defined \max_{1\le k\le K} 
\E\Big[\sum_{t=1}^n X_{k,t} - \sum_{t=1}^n X_{I_t,T_{I_t}(t)}
\Big],
\label{eqn:regret}
\vspace{-0.15cm}
\end{equation}
where $X_{k,t} \in \R$ is the random variable giving the $t$th payoff
of action $k$, $I_t \in \{1,\ldots,K\}$ denotes the action taken by
the policy at time-step $t$, 
and $T_k(t) \defined \sum_{s=1}^t \indicate{I_s = k}$ 
denotes the number of times action $k$ is chosen by the policy up to time $t$.  
Here, $\indicate{p}$ is the indicator
function, set to 1 if the predicate $p$ is true, 0 otherwise.  
The objective of the agent is to maximize the total payoff, or
equivalently to minimize the cumulative regret.  
By rearranging \eqref{eqn:regret} and conditioning, the regret can be rewritten

\vspace{-0.8cm}

\begin{align}
\label{eq:regretdecomposition}
R_n 
= \sum_{k=1}^K \E[T_k(n)](\mu^* -\mu_k),
\end{align}

\vspace*{-0.5cm}

\noindent
where $\mu_k\defined\E[X_{k,t}]$ and
$\mu^* \defined \max_{j=1,\ldots,K} \mu_j$.

The analysis of the stochastic MAB problem was pioneered by
\citet{Lai85} who showed that, when the payoff distributions are
defined by a single parameter, the asymptotic regret of any
sub-polynomially consistent policy 
(i.e., a policy that selects non-optimal actions only sub-polynomially 
many times in the time horizon) 
is lower bounded by $\Omega(\log n)$.  
In particular, for Bernoulli payoffs
\vspace{-0.1cm}
\begin{equation}
  \liminf_{n \to \infty} \frac{R_n}{\log n} \geq \sum_{k :
    \Delta_k > 0}\frac{\Delta_k}{\text{KL}(\mu_k,\mu^*)},
\label{eq:lb}
\vspace{-0.25cm}
\end{equation}
where $\Delta_k \defined \mu^* - \mu_{k}$ and $\KL{p,q} \defined p
\log(p/q) + (1-p) \log( \frac{1-p}{1-q} )$ for $p,q\in [0,1]$.
\citet{Lai85} also presented an algorithm based on upper confidence
bounds (UCB), which achieves a regret asymptotically matching the
lower bound (for certain parametric distributions).

Later, \citet{aueretal02} proposed UCB1 (Algorithm~\ref{alg:ucb1}), 
which broadens the practical use of UCB by dropping the requirement that payoff
distributions fit a particular parametric form.
Instead, one need
only make the much weaker assumption that the rewards are bounded; in
particular, we let $X_{k,t} \in [0,1]$.  
\citet{aueretal02} proved that, for any finite number of actions $n$,
UCB1's regret is bounded by
%
\vspace{-0.1cm}
\begin{equation}
R_n \leq \sum_{k: \Delta_k>0} \frac{8\log n}{\Delta_k}
+ \Big(1 + \frac{\pi^2}{3}\Big) \Delta_k.
\label{eq:ucb1 ub}
\end{equation}
\vspace{-0.6cm}

\begin{algorithm}[t]
\begin{algorithmic}[1]
\FOR{$k \in\{1, \ldots, K\}$}
\STATE 
Play $k$, observe $X_{k,1}$, set $\bar{\mu}_{k,1} := X_{k,1}$; $T_k(1) := 1$.
\ENDFOR
\FOR{$t \in\{K+1, K+2, \ldots\}$}
\STATE 
Play action $k$ that maximizes 
$\bar{\mu}_{j,t-1} + \sqrt{\frac{2\log t}{T_j(t-1)}}$; 
set $T_k(t)=T_k(t-1)+1$ and $T_j(t)=T_j(t-1)$ for $j\neq k$, 
observe payoff $X_{k,T_k(t)}$, and compute
$\bar{\mu}_{k,t} = (1-1/T_k(t)) \bar{\mu}_{k,t-1} + X_{k,T_k(t)}/T_k(t)$.
\ENDFOR
\end{algorithmic}
\caption{UCB1 \citep{aueretal02}}
\label{alg:ucb1}
\end{algorithm}

Various improvements of the UCB1 algorithm have since been proposed.
One approach of particular interest is the UCB-V algorithm
\citep{audibert2009exploration}, which takes the empirical variances
into account when constructing confidence bounds.  Specifically, UCB-V
uses the bound
$$
\bar{\mu}_{k,t} + \sqrt{\frac{2\var_{k,T_k(t-1)}\mathcal{E}_{T_k(t-1),t}}{T_k(t-1)}} +
c\frac{3 \mathcal{E}_{T_k(t-1),t}}{T_{k}(t-1)},
$$
where $\var_{k,s}$ denotes the empirical variance of arm $k$'s payoffs
after $s$ pulls, $c>0$, and $\mathcal{E}_{s,t}$ is an
\emph{exploration function} required to be a non-decreasing function
of $s$ or $t$ (typically $\mathcal{E}_{s,t} = \zeta\log(t)$ for a
fixed constant $\zeta > 0$).  The UCB-V procedure can then be
constructed by substituting the above confidence bound into
Algorithm~\ref{alg:ucb1}, which yields a regret bound that scales with
the true variance of each arm
\begin{equation}
R_n \leq c_\zeta \sum_{k: \Delta_k>0} 
\Big(\frac{\var(X_{k,1})}{\Delta_k} + 2\Big) \log n
.
\vspace{-0.2cm}
\label{eq:ucbv ub}
\end{equation}
Here $c_\zeta$ is a constant relating to $\zeta$ and $c$.
In the worst case, when $\var(X_{k,1}) = 1/4$ and $\Delta_k=1/2$, 
this bound is slightly worse than UCB1's bound;
however, it is usually better in practice, particularly
if some $k$ has small $\Delta_k$ and $\var(X_{k,1})$.

A more recent algorithm is KL-UCB \citep{Cappe2013},
where the confidence bound for arm $k$ is based on solving
\vspace{-0.1cm}
\[
\sup\Big\{ \mu: \KL{\bar{\mu}_{k,t},\mu} \le \frac{ f(t)}{ T_k(t)} \Big\}\,,
\vspace{-0.2cm}
\]
for a chosen increasing function $f(\cdot)$,
which can be solved efficiently
since $\KL{p,\cdot}$ is smooth and increasing on $[p,1]$. 
By choosing $f(t)\!=\!\log(t)+3\log\log(t)$ for $t\!\ge\! 3$ 
(and $f(1)\!\!\!=\!\!f(2)\!\!=\!\!f(3)$),
KL-UCB achieves a regret bound
\vspace{-0.1cm}
\begin{equation}
R_n \le \sum_{k: \Delta_k>0} 
\Big( \frac{\Delta_k}{\KL{\mu_k,\mu^*}} \Big) \log n + O(\sqrt{\log(n)})
\vspace{-0.1cm}
\label{eq:ucbkl ub}
\end{equation}
for $n\ge 3$,
with explicit constants for the ``higher order'' terms 
\citep[Corollary 1]{Cappe2013}.
Apart from the higher order terms, this bound matches the lower bound 
\eqref{eq:lb}.
In general, 
KL-UCB is expected to be better than UCB-V except for large sample sizes
and small variances. 
%
%
%
Note that,
given any set of UCB algorithms,
one can apply the tightest upper confidence from the set, 
via the union bound, 
at the price of a small additional constant in the regret.

Another approach that has received significant recent interest
is Thompson sampling~(TS) \citep{thompson1933likelihood}:
a Bayesian method where actions are chosen randomly in proportion
to the posterior probability that their mean payoff is optimal.
TS is known to outperform UCB-variants when payoffs are Bernoulli distributed
\citep{chapelle2011empirical,may2011simulation}.  
Indeed, the finite time regret of TS under Bernoulli payoff distributions
closely matches the lower bound 
\eqref{eq:lb}
\citep{kaufmann2012thompson}:
\vspace*{-0.1cm}
$$
R_n \leq (1+\epsilon)\!\!\sum_{k: \Delta_k>0} \!\!
\frac{\Delta_k(\log(n) +
  \log\log(n))}{\text{KL}(\mu_k,\mu^*)} + C(\epsilon,\mu_{1:K})
,
\vspace*{-0.1cm}
$$
for every $\epsilon > 0$, where $C$ is a problem-dependant constant.
However, since it is not possible to have Bernoulli distributed
payoffs with the same mean and different variances, 
this analysis is not directly applicable to our setting. 
Instead, we consider a more general version of Thompson sampling 
\citep{AgGo12} that converts real-valued to Bernoulli-distributed payoffs 
through a resampling step (Algorithm~\ref{alg:ts}),
which has been shown to obtain
\vspace*{-0.2cm}
\begin{equation}
R_n \leq \Big(\sum_{k: \Delta_k>0} \frac{1}{\Delta_k^2}\Big)^2\log(n)
.
\label{eq:ts ub}
\end{equation}

\vspace*{-0.4cm}

\begin{algorithm}[t]
   \caption{Thompson Sampling \small\citep{AgGo12}}
   \label{alg:ts}
\begin{algorithmic}
  \STATE \textbf{Require: } Prior parameters $\alpha$ and $\beta$
  \STATE \textbf{Initialize: } $S_{1:K}(0) := 0$, $F_{1:K}(0) := 0$
  \FOR{$t \in \{1,2,\ldots\}$ }
  \STATE Sample $\theta_{t,k}\!\sim\!\mathcal{B}(S_k(t\!-\!1)\!+\!\alpha,F_k(t\!-\!1)\!+\!\beta)$, $k=1...K$
  \STATE Play $k = \arg\max_{j=1,\ldots,K} \theta_{t,j}$; observe $X_t\in[0,1]$
  \STATE Sample $\hat{X}_t \sim \text{Bernoulli}(X_t)$
  \STATE \textbf{if} $\hat{X}_t = 1$ \textbf{then} set $S_k(t) = S_{k}(t-1)+1$
  \textbf{else} set $F_{k}(t) = F_{k}(t)+1$ 
  \ENDFOR
\end{algorithmic}
\end{algorithm}

\section{Combining \MC Estimators}
\label{sec:mc}

We now formalize the main problem we consider in this paper.
Assume we are given a finite number of \MC estimators, $k=1,\ldots,K$,
where base estimator $k$ produces a sequence of real-valued 
random variables $(X_{k,t})_{(t=1,2,\ldots)}$
whose mean converges to the unknown target quantity, $\mu\in \real$. 
Observations from the different estimators are assumed to be independent 
from each other.
We assume, initially, that drawing a sample from each estimator takes 
constant time, hence the estimators differ only in terms of how fast 
their respective sample means $\oX_{k,n} = \frac1n \sum_{t=1}^n X_{k,t}$ 
converge to $\mu$.
The goal is to design a \emph{sequential estimation procedure} that works 
in discrete time steps:
For each round $t=1,2,\ldots$,
based on the previous observations,
the procedure selects one estimator $I_t \in \{1, \ldots, K\}$,
whose observation is used by an outer procedure to update the
estimate $\hat{\mu}_{t}\in \real$ based on the values observed so far.

As is common in the \MC literature, we evaluate accuracy by the
mean-squared error (MSE).  That is, we define the loss of the
sequential method $\A$ at the end of round $n$ by $L_n(\A) = \EE{
  (\hmu_n - \mu)^2 }$.  A reasonable goal is to then compare the loss,
$L_{k,n} = \EE{ (\oX_{k,n}-\mu)^2 }$, of each base estimator 
to the loss of $\A$.  
In particular, we propose to evaluate the performance of $\A$
by the (normalized) regret \vspace*{-0.1cm}
\[
R_n({\A}) = n^2 \Big( L_n({\A}) - \min_{1\le k\le K} L_{k,n} \Big)\,,
\vspace*{-0.1cm}
\]
which measures the excess loss of $\A$ due to its initial ignorance of
estimator quality.  Implicit in this definition is the assumption that
$\A$'s time to select the next estimator is negligible compared to the
time to draw an observation.
Note also that the excess loss is multiplied by $n^2$, which ensures
that, in standard settings, when $L_{k,n} \propto 1/n$, a sublinear
regret (i.e., $|R_n(\A|)/n \ra 0$ as $n\ra\infty$) implies that the loss of
$\A$ asymptotically matches that of the best estimator.

%


In the next two sections we will adopt a simple strategy for combining
the values returned from the base estimators: $\A$ simply returns
their (unweighted) average as the estimate $\hmu_n$ of $\mu$. A more
sophisticated approach may be to weight each of these samples
inversely proportional to their respective (sample) variances.
However, if the adaptive procedure can quickly identify and ignore
highly suboptimal arms the savings from the weighted estimator will
diminish rapidly. Interestingly, this argument does not immediately
translate to the nonuniform cost case considered in
\cref{sec:nonuniformcost} as will be shown empirically in
\cref{sec:exp}.
%
%

\section{Combining Unbiased I.I.D. Estimators}
\label{sec:uniformcost}

Our main assumption in this section will be the following:
\begin{assumption}\label{ass:uniiid}
Each estimator produces a sequence of i.i.d.\ random observations
with common mean $\mu$ and finite variance;
values from different estimators are independent.
\end{assumption}
Let $\psi_k$ denote the distribution of samples from estimator $k$.
Note that $\Psi = (\psi_k)_{1\le k \le K}$ completely determines the
sequential estimation problem.
%
%
%
Since the samples coming from estimator $k$ are i.i.d., we have
$\Var{X_{k,1}} = \Var{X_{k,t}}$.  Let $V_k = \Var{X_{k,1}}$ and $V^* =
\min_{1\le k \le K} V_k$.  Furthermore, let $L_{k,t} =
\Var{X_{k,1}}/t$, hence $\min_{1\le k \le K} L_{k,t} = V^*/t$.
We then have the first main result of this section.

\begin{theorem}[Regret Identity]\label{thm:regretmatch}
Consider $K$ estimators for which  \Cref{ass:uniiid} holds,
and
let $\A$ be an arbitrary allocation procedure.
Then, for any $n\ge 1$,
the MSE-regret of the estimation procedure $\A^{\rm avg}$, estimating $\mu$
using the sample-mean of the observations obtained by $\A$, satisfies
\vspace*{-0.1cm}
\begin{equation}
R_n(\A^{\rm avg})  
= \sum_{k=1}^K \EE{T_k(n)} (V_k- V^* )\,.
\label{eqn:regret2}
\vspace*{-0.4cm}
\end{equation}
\end{theorem}

The proof follows from a simple calculation given in \cref{sec:apx-uniform}.
Essentially, one can rewrite the loss as
 $L_n(\A) = 
 \frac{1}{n^2}\E\left[\sum_{k=1}^K S_{k,n}^2 +
    2\sum_{k \ne j} S_{k,n}S_{j,n}\right]$,
where $S_{k,n}$ is the centered sum of observations for arm $k$.
The cross-terms can be shown to cancel by independence,
and Wald's second identity with some algebra gives the result.

The tight connection between sequential estimation
and bandit problems revealed by~\eqref{eqn:regret2}
 allows one to reduce sequential estimation to the design 
of bandit strategies and vice versa. 
Furthermore, regret bounds transfer both ways.

\begin{theorem}[Reduction]
\label{thm:reduction}
Let \Cref{ass:uniiid} hold for $\Psi$.
Define a corresponding bandit problem $(\nu_k)$
by assigning $\nu_k$ as the distribution of $-X_{k,1}^2$.
%
%
Given an arbitrary allocation strategy $\A$,
let $\BanditAlg(\A)$ be the bandit strategy 
that consults $\A$ to select the next arm
after obtaining reward $Y_t$ (assumed nonpositive),
based on feeding observations $(-Y_t)^{1/2}$ to $\A$ 
and copying $\A$'s choices.
Then, the bandit-regret of $\BanditAlg(\A)$ in bandit problem $(\nu_k)$
is the same as the MSE-regret of $\A$ in estimation problem $\Psi$.
%
%
Conversely, 
given an arbitrary bandit strategy $\B$,
let $\MCAlg(\B)$ be the allocation strategy 
that consults $\B$ to select the next estimator
after observing $-Y_t^2$,
based on feeding rewards $Y_t$ to $\B$ 
and copying $\B$'s choices.
Then the MSE-regret of $\MCAlg(\B)$ in estimation problem $\Psi$ 
is the same as the bandit-regret of $\B$ in bandit problem $(\nu_k)$
(where $\MCAlg(\B)$ uses the average of observations as its estimate).
\end{theorem}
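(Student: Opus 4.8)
The plan is to funnel both claims through the single regret formula $\sum_{k=1}^K \EE{T_k(n)}(V_k-V^*)$ already established in \Cref{thm:regretmatch}, showing that each side of each equivalence equals this common quantity. The routine ingredient is the mean computation for the constructed bandit; the delicate ingredient is arguing that the arm-selection counts $T_k(n)$ induced on the two sides genuinely agree.

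For the first direction ($\BanditAlg(\A)$) I would start by identifying the arm means of the bandit problem $(\nu_k)$: since $\nu_k$ is the law of $-X_{k,1}^2$, its mean is $\mu_k=-\EE{X_{k,1}^2}=-(V_k+\mu^2)$, where the common mean $\mu$ lets me split the second moment as $V_k+\mu^2$ uniformly across arms. Hence maximizing payoff is equivalent to minimizing $V_k$, the optimal mean is $\mu^*=-(V^*+\mu^2)$, and the shared term $\mu^2$ cancels in every gap, giving $\mu^*-\mu_k=V_k-V^*$. Substituting into the regret decomposition \eqref{eq:regretdecomposition} yields
\[
R_n(\BanditAlg(\A)) = \sum_{k=1}^K \EE{T_k(n)}(V_k-V^*),
\]
which is exactly the right-hand side of \eqref{eqn:regret2}, i.e.\ the MSE-regret of $\A^{\rm avg}$ — provided the $\EE{T_k(n)}$ here coincide with those of $\A$ run on $\Psi$.

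The crux is therefore a coupling argument establishing that $\BanditAlg(\A)$ and $\A$ produce the same selection counts. I would build a single probability space carrying the i.i.d.\ observation arrays $(X_{k,t})$ and set the bandit rewards to $Y_t=-X_{I_t,T_{I_t}(t)}^2$. By construction $\BanditAlg(\A)$ feeds $\A$ the value $(-Y_t)^{1/2}$ and then copies $\A$'s choice, so the action sequence $(I_t)$ — and hence every $T_k(n)$ — is driven by the very same internal $\A$ as in the estimation problem. The point requiring care is that the quantity handed to $\A$ must be the one it would observe when allocating on $\Psi$, so that $\A$'s internal state, and thus its decisions, evolve identically on both sides; this is exactly where the precise form of the transform $X\mapsto -X^2\mapsto (-\,\cdot\,)^{1/2}$ enters, and I expect verifying that the fed-back observations reproduce $\A$'s estimation-time trajectory to be the main obstacle. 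Granting this, the $\EE{T_k(n)}$ match and the two regrets are equal.

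The converse ($\MCAlg(\B)$) is the symmetric statement, argued along the same lines: now $\B$ is run on the rewards $Y_t=-X_t^2$ it would see in $(\nu_k)$ and its choices are copied, so the induced counts $T_k(n)$ are precisely those of $\B$ in the bandit problem. Since $\MCAlg(\B)$ reports the unweighted average, \Cref{thm:regretmatch} applies verbatim and gives MSE-regret $=\sum_k \EE{T_k(n)}(V_k-V^*)$, which by \eqref{eq:regretdecomposition} and the mean identity above is exactly the bandit-regret of $\B$. In this direction the coupling is cleaner, because $\B$ consumes the bandit rewards directly with no transform, so nothing beyond the mean computation is needed.
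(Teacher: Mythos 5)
Your proposal is correct and takes essentially the same route as the paper: both directions reduce to \cref{thm:regretmatch} combined with the decomposition \eqref{eq:regretdecomposition}, using that $\Delta_k=\E[-X_{k^*,1}^2]-\E[-X_{k,1}^2]=V_k-V^*$ because the common $\mu^2$ cancels, and that copying choices forces the counts $T_k(n)$ to agree. The coupling subtlety you flag --- that $(-Y_t)^{1/2}=|X|$ need not equal $X$ for signed observations --- is real but is passed over silently in the paper's one-line proof as well, and is harmless under the nonnegative-support assumptions used downstream.
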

\vspace*{-0.2cm}
\begin{proof}[Proof of \cref{thm:reduction}]
The result follows from \cref{thm:regretmatch} since
$V_k = \E[X_{k,1}^2] - \mu^2$ and $V^* = \E[X_{k^*,1}^2]- \mu^2$
where $k^*$ is the lowest variance estimator,
hence $V_k - V^* = \E[X_{k,1}^2] - \min_{1\le k'\le K} \E[X_{k',1}^2]$.
Furthermore, the bandit problem $(\nu_k)$ ensures the regret of a procedure 
that chooses arm $k$ $T_k(n)$ times is $\sum_{k=1}^K \E[T_k(n)] \Delta_k$,  
where
$\Delta_k = \max_{1\le k' \le K}\E[-X_{k',1}^2] - \E[-X_{k,1}^2] = V_k - V^*$.
\end{proof}
\vspace*{-0.2cm}

From this theorem one can also derive a lower bound.
First, let $\V(\psi)$ denote the variance of $X\!\sim\!\psi$ and
let $\V^*(\Psi) = \min_{1\le k \le K} \V(\psi_k)$.
For a family $\FF$ of  distributions over the reals,
let $D_{\inf}(\psi,v,\FF) = \inf_{\psi'\in \FF: \V(\psi')<v} D(\psi,\psi')$, 
where 
$D(\psi,\phi) =\int \log \frac{d\psi}{d\phi}(x) \,d\psi(x)$,
if the Radon-Nikodym derivative $d\psi/d\phi$ exists, 
and $\infty$ otherwise.
Note that $D_{\inf}(\psi,v,\FF)$ measures how distinguishable $\psi$ is from 
distributions in $\FF$ having smaller variance
than $v$. Further, we let $R_n(\A,\Psi)$ denote the regret of $\A$ on the 
estimation problem specified using the distributions $\Psi$.
\begin{theorem}[MSE-Regret Lower Bound]
Let $\FF$ be the set of distributions supported on $[0,1]$ and assume that
$\A$ allocates a subpolynomial fraction to suboptimal estimators
for any $\Psi\!\in\!\FF^K$:
i.e.,
$\E_{\Psi}[T_k(n)] \!=\! O(n^a)$ for all $a>0$ and $k$ such that 
$\V(\psi_k)\!>\!\V^*(\Psi)$.
Then, for any $\Psi \!\in\! \FF$ where not all variances are equal and 
$0\!<\!D_{\inf}(\psi_k,\V^*(\Psi),\FF)\!<\!\infty$ holds whenever 
$\V(\psi_k)\!>\!\V^*(\Psi)$,
we have
\vspace*{-0.1cm}
\[
\lim\inf_{n \ra \infty} \frac{R_n(\A,\Psi)}{\log n} \ge
\sum_{k: \V(\psi_k)>\V^*(\Psi)} 
 \frac{\V(\psi_k)-\V^*(\Psi)}{D_{\inf}(\psi_k,\V^*(\Psi),\FF)}.
\vspace*{-0.2cm}
\]
\end{theorem}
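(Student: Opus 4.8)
The plan is to prove the bound one estimator at a time, via the classical change-of-measure argument of \citet{Lai85}, exploiting the fact that here both the regret and the notion of suboptimality are governed entirely by the variances. By \cref{thm:regretmatch} we have $R_n(\A,\Psi) = \sum_{k}\E_{\Psi}[T_k(n)]\,(\V(\psi_k)-\V^*(\Psi))$, so it suffices to show that every suboptimal estimator $k$ (one with $\V(\psi_k)>\V^*(\Psi)$) is selected at least $\frac{(1-o(1))\log n}{D_{\inf}(\psi_k,\V^*(\Psi),\FF)}$ times in expectation under $\Psi$; multiplying by the gap $\V(\psi_k)-\V^*(\Psi)$ and summing then yields the claim.

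First I would fix such a $k$ and, for an arbitrary $\psi'\in\FF$ with $\V(\psi')<\V^*(\Psi)$ and $D(\psi_k,\psi')<\infty$, form the perturbed problem $\Psi'$ obtained from $\Psi$ by replacing $\psi_k$ with $\psi'$ and leaving the other estimators untouched. In $\Psi'$ the estimator $k$ has the strictly smallest variance, so $\V^*(\Psi')=\V(\psi')$ and every other estimator $j$ satisfies $\V(\psi_j)\ge\V^*(\Psi)>\V^*(\Psi')$. The consistency hypothesis, applied to $\Psi'\in\FF^K$, therefore forces $\E_{\Psi'}[T_j(n)]=O(n^a)$ for every $a>0$ and every $j\ne k$, whence $\E_{\Psi'}[T_k(n)]\ge n-\sum_{j\ne k}\E_{\Psi'}[T_j(n)]=n(1-o(1))$: under the confusing instance the learner is compelled to play $k$ almost always, whereas under $\Psi$ the same hypothesis makes $k$ rare.

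The heart of the argument is then the transport inequality linking the two instances. Since $\Psi$ and $\Psi'$ differ only in the law of estimator $k$, the divergence-decomposition lemma gives $\KL{\P_\Psi^{(n)},\P_{\Psi'}^{(n)}}=\E_{\Psi}[T_k(n)]\,D(\psi_k,\psi')$ for the laws of the length-$n$ histories. Applying the data-processing inequality to the event $A=\{T_k(n)\le n/2\}$ — which has probability $\to 1$ under $\Psi$ and, by Markov's inequality together with the previous step, probability $O(n^{a-1})$ under $\Psi'$ — yields $\E_{\Psi}[T_k(n)]\,D(\psi_k,\psi')\ge(1-o(1))\log n$. Taking the infimum over all admissible $\psi'$ replaces $D(\psi_k,\psi')$ by $D_{\inf}(\psi_k,\V^*(\Psi),\FF)$ through a routine $\epsilon$-argument, and the hypotheses $0<D_{\inf}(\psi_k,\V^*(\Psi),\FF)<\infty$ ensure that the competing set is nonempty and that the resulting bound is nondegenerate.

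I expect the main obstacle to be conceptual bookkeeping rather than new ideas: the confusing distribution $\psi'$ will generally have a mean different from $\mu$, so $\Psi'$ need not satisfy \cref{ass:uniiid}, and one must verify that this is harmless. The point is that \cref{thm:regretmatch} is invoked only for the \emph{original} instance $\Psi$ (which does satisfy the assumption), while the consistency hypothesis — stated for all of $\FF^K$ and phrased purely in terms of variances — is precisely what licenses its use on $\Psi'$; it is this variance-based formulation that makes variance-reducing perturbations the correct confusions and produces exactly the stated variance-constrained $D_{\inf}$. The only remaining care is making the two $o(1)$ terms uniform as $n\to\infty$, which is straightforward given the finiteness of $D_{\inf}$.
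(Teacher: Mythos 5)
Your proof is correct, but it takes a genuinely different route from the paper's. The paper disposes of this theorem in one line: it invokes the reduction of \cref{thm:reduction} to turn the estimation problem into a stochastic bandit with arm distributions given by the laws of $-X_{k,1}^2$, and then cites the general asymptotic lower bound of Burnetas and Katehakis (their Proposition~1) for that bandit. You instead re-derive the lower bound from first principles: you use the regret identity of \cref{thm:regretmatch} to reduce everything to lower-bounding $\E_\Psi[T_k(n)]$, and then run the Lai--Robbins/Burnetas--Katehakis change-of-measure argument directly on the estimation problem --- perturbed instance $\Psi'$ with a lower-variance $\psi'$ in slot $k$, divergence decomposition, Markov plus the consistency hypothesis on both instances, and a Bretagnolle--Huber/data-processing step on the event $\{T_k(n)\le n/2\}$, followed by the infimum over $\psi'$. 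What the paper's route buys is brevity. What yours buys is self-containment and, more substantively, a cleaner treatment of the point you flag yourself: the confusing distribution $\psi'$ need not have mean $\mu$, so the perturbed instance lies outside \Cref{ass:uniiid}, and under the $x\mapsto -x^2$ reduction ``maximal mean of $-X^2$'' coincides with ``minimal variance'' only when all means agree --- a mismatch the paper's citation glosses over but that your variance-based change of measure, which invokes \cref{thm:regretmatch} only for the original instance and the (variance-phrased) consistency hypothesis for $\Psi'$, avoids entirely. Your argument also makes transparent why the resulting $D_{\inf}$ is constrained by variance rather than by mean. The only cosmetic caveat is that the quantity your argument naturally produces is $\inf\{D(\psi_k,\psi'): \V(\psi')<\V^*(\Psi),\ D(\psi_k,\psi')<\infty\}$, which equals the stated $D_{\inf}(\psi_k,\V^*(\Psi),\FF)$ since distributions with infinite divergence do not affect the infimum; this is worth a sentence but is not a gap.
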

\begin{proof}
The result follows from \cref{thm:reduction} and \citep[Proposition 1]{BuKa96}.
\end{proof}
\vspace*{-0.2cm}
Using \cref{thm:reduction} we can also establish bounds on the MSE-regret for
the algorithms mentioned in \cref{sec:background}.
\begin{theorem}[MSE-Regret Upper Bounds]
Let \Cref{ass:uniiid} hold for $\Psi\!=\!(\psi_k)$ 
where (for simplicity) we assume each $\psi_k$ is supported on $[0,1]$.
Then, after $n$ rounds, 
$\MCAlg(\B)$ 
achieves the MSE-Regret bound of:
\eqref{eq:ucb1 ub} when using $\B\!=\!\text{UCB1}$;
\eqref{eq:ucbv ub} when using $\B\!=\!\text{UCB-V}$ with $c_\zeta=10$;
\eqref{eq:ucbkl ub} when using $\B\!=\!\text{UCB-KL}$;
and
\eqref{eq:ts ub} when using $\B\!=\!\text{TS}$.%
\footnote{
Note that to apply the regret bounds from Section~\ref{sec:background}, 
one has to feed the bandit algorithms with 
$1-Y_t^2$ instead of $-Y_t^2$ in Theorem~2. 
This modification has no effect on the regret.
}
\end{theorem}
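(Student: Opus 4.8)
The plan is to obtain the statement as a direct consequence of the Reduction theorem together with the published regret guarantees of the four base algorithms. By \cref{thm:reduction}, for any bandit strategy $\B$ the MSE-regret of $\MCAlg(\B)$ on $\Psi$ coincides exactly with the bandit-regret of $\B$ on the induced problem $(\nu_k)$, in which $\nu_k$ is the law of $-X_{k,1}^2$ and the suboptimality gaps are $\Delta_k = V_k - V^*$. Thus it is enough to check, for each of $\B \in \{\text{UCB1}, \text{UCB-V}, \text{KL-UCB}, \text{TS}\}$, that the hypotheses of its cited regret bound are met on $(\nu_k)$, and then to transcribe that bound, reading $\Delta_k$ as the variance gap $V_k - V^*$.

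First I would address the single nontrivial hypothesis, namely the range of the payoffs. Since each $\psi_k$ is supported on $[0,1]$, the observed values satisfy $Y_t \in [0,1]$, so the natural bandit payoff $-Y_t^2$ lies in $[-1,0]$, whereas \eqref{eq:ucb1 ub}, \eqref{eq:ucbv ub}, \eqref{eq:ucbkl ub} and \eqref{eq:ts ub} were all established for payoffs in $[0,1]$ (TS additionally resampling a Bernoulli in Algorithm~\ref{alg:ts}). I would therefore feed $\B$ the shifted payoff $1 - Y_t^2 \in [0,1]$, as in the footnote. Because bandit-regret is a function of payoffs only through differences of their means (see \eqref{eq:regretdecomposition}), this common additive shift changes neither the gaps $\Delta_k$ nor the regret; by \cref{thm:reduction} the MSE-regret is likewise unaffected. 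This places the problem squarely inside the scope of the cited analyses.

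It then remains to invoke each guarantee: \eqref{eq:ucb1 ub} from \citet{aueretal02}, \eqref{eq:ucbkl ub} from \citet{Cappe2013}, and \eqref{eq:ts ub} from \citet{AgGo12}, in every case with $\Delta_k = V_k - V^*$. For UCB-V the only point requiring attention is that the variance entering \eqref{eq:ucbv ub} is the variance of the shifted arm, $\var(1 - Y_t^2) = \var(Y_t^2) = \var(X_{k,1}^2)$, and that the stated constant $c_\zeta = 10$ corresponds to a specific admissible choice of the exploration parameters $\zeta$ and $c$ in \citet{audibert2009exploration}. I expect no analytic difficulty in this last step; the main (and only real) obstacle is the bookkeeping of correctly identifying the translated quantities --- in particular that UCB-V's bound governs $\var(X_{k,1}^2)$ rather than $\var(X_{k,1})$ --- and confirming that the shift leaves the regret invariant, so that the finite-$n$ constants in the source theorems transfer verbatim.
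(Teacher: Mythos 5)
Your proposal is correct and follows essentially the same route the paper intends: apply \cref{thm:reduction} to identify the MSE-regret with the bandit-regret on the induced problem with gaps $\Delta_k = V_k - V^*$, shift the payoffs to $1-Y_t^2\in[0,1]$ (as in the paper's footnote, noting the shift leaves the regret invariant), and then cite each algorithm's published bound. Your additional observation that the variance appearing in the UCB-V bound is that of the transformed payoff, $\var(1-Y_t^2)=\var(X_{k,1}^2)$ rather than $\var(X_{k,1})$, is a correct and worthwhile clarification of the bookkeeping.
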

\vspace*{-0.2cm}
Additionally, due to \cref{thm:reduction}, one can also obtain bounds
on the minimax MSE-regret by exploiting the lower bound for bandits in
\citep{ACFS:2002}.  In particular, the UCB-based bandit algorithms
above can all be shown to achieve the minimax rate $O(\sqrt{Kn})$ up
to logarithmic factors, immediately implying that the minimax
MSE-regret of $\MCAlg(\B)$ for $\B \in \{ \text{UCB1}, \text{UCB-V},
\text{KL-UCB}\}$ is of order $L_n(\MCAlg(\B))-L_n^* =
\tilde{O}(K^{1/2} n^{-(1+\frac1{2})})$.%
\footnote{
$\tilde{O}$ denotes the order up to logarithmic factors.
To remove such factors one can exploit MOSS \citep{AuBu10}.
}

\cref{sec:apx-range} provides a discussion of
how alternative ranges on the observations can be handled,
and how the above methods can still be 
applied when the payoff distribution is unbounded but satisfies
moment conditions.

\section{Non-uniform Estimator Costs}
\label{sec:nonuniformcost}

Next, we consider the case when the base estimators can take
\emph{different} amounts of time to generate observations.  A
consequence of non-uniform estimator times, which we refer to as
\emph{non-uniform costs}, is that the definitions of the loss and
regret must be modified accordingly.  Intuitively, if an estimator
takes more time to produce an observation, it is less useful than
another estimator that produces observations with (say) identical
variance but in less time.

To develop an appropriate notion of regret for this case, we introduce
some additional notation.  Let $D_{k,m}$ denote the time needed by
estimator $k$ to produce its $m$th observation, $X_{k,m}$.  As before,
we let $I_m\!\in\! \{1,\ldots,K\}$ denote the index of the estimator
that $\A$ chooses in round $m$.  Let $J_m$ denote the time when $\A$
observes the $m$th sample, $Y_m \!=\! X_{I_m,T_{I_m}(m)}$; thus, $J_1
\!=\! D_{I_1,1}$, $J_2 \!=\! J_1\!+\!D_{I_2,T_{I_2}(2)}$, and $J_{m+1}
\!=\! J_{m}\!+\!D_{I_m,T_{I_m}(m)} \!=\! \sum_{s=1}^m
D_{I_s,T_{I_s}(s)}$.  For convenience, define $J_0 \!=\! 0$.  Note
that round $m$ starts at time $J_{m-1}$ with $\A$ choosing an
estimator, and finishes at time $J_m$ when the observation is received
and $\A$ (instantaneously) updates it estimate.  Thus, at time $J_m$ a
new estimate $\hmu_m$ becomes available: the estimate is ``renewed''.
Let $\hat{\mu}(t)$ denote the estimate available at time $t\ge 0$.
Assuming $\A$ produces a default estimate $\hmu_0$ before the first
observation, we have $\hmu(t)\!=\!\hmu_0$ on $[0,J_1)$, $\hat{\mu}(t)
\!=\! \hmu_1$ on $[J_1,J_2)$, etc.  If $N(t)$ denotes the round index
at time $t$ (i.e., $N(t)\!=\!1$ on $[0,J_1)$, $N(t)\!=\!2$ on
$[J_1,J_2)$, etc.)  then $\hmu(t) \!=\! \hmu_{N(t)-1}$.  The MSE of
$\A$ at time $t\ge 0$ is \vspace*{-0.1cm}
\[
L(\A,t) = \EE{ (\hmu(t)-\mu)^2 }\,.
\vspace*{-0.1cm}
\]
By comparison, the estimate for a single estimator $k$ at time $t$ is 
$\hmu_k(t) = \one{N_k(t)>1} \frac{\sum_{m=1}^{N_k(t)-1} X_{k,m} }{N_k(t)-1}$,
where $N_k(t)\!=\!1$ on $[0,D_{k,1})$, 
$N_k(t)\!=\!2$ on $[D_{k,1},D_{k,1}+D_{k,2})$, etc.
We set $\hmu_k(t)\!=\!0$ on $[0,D_{k,1})$ 
to let $\hmu_k(t)$ be well-defined on $[0,D_{k,1})$.
Thus, at time $t\ge0$ the MSE of estimator $k$  is 
\vspace*{-0.1cm}
\[
L_k(t) = \EE{ (\hmu_k(t) - \mu)^2 }\,.
\vspace*{-0.1cm}
\]
Given these definitions, it is natural to define the regret as
\vspace*{-0.1cm}
\[
R(\A,t) = t^2 \Big( L(\A,t) - \min_{1\le k \le K} L_k(t) \Big) \,.
\vspace*{-0.1cm}
\]
As before, the $t^2$ scaling is chosen so that, 
under the condition that $L_k(t) \propto 1/t$, 
a sublinear regret implies that $\A$ is ``learning''. 
Note that this definition generalizes the previous one:
if $D_{k,m}=1\,\forall k$, $m$, then $R_n(\A)=R(\A,n)$.

In this section we make the following assumption.
\begin{assumption}
\label{ass:cost}
For each $k$,
$(X_{k,m},D_{k,m})_{(m=1,2, \ldots)}$ 
is an i.i.d.\ sequence
such that
$\EE{X_{k,1}}\!=\!\mu$, 
$V_k \!\defined \! \Var{X_{k,1}}\!<\!\infty$,
$\Prob{D_{k,m}\!>\!0}\!=\!1$
and
$\delta_k\!\defined\!\E[D_{k,1}]\!=\!\EE{D_{k,m}}\!<\!\infty$.
Furthermore, we assume that the sequences 
for different $k$ are independent of each other.
\end{assumption}

Note that \cref{ass:cost} allows $D_{k,m}$ to be a deterministic value;
a case that holds when the estimators use deterministic algorithms to produce 
observations. 
Another situation arises when $D_{k,m}$ is stochastic 
(i.e., the estimator uses a randomized algorithm) 
and $(X_{k,m},D_{k,m})$ are correlated.
In which case $\hmu_k(t)$ may be a biased estimate of $\mu$.
However, if $(X_{k,m})_m$ and $(D_{k,m})_m$ are independent 
and $\Prob{N_k(t)\!>\! 1}\!=\!1$ 
then $\hmu_k(t)$ is unbiased.
Indeed, in such a case, $(N_k(t))_t$ is independent of the partial 
sums $(\sum_{m=1}^n X_{k,m})_n$, 
hence 
$\E[\hmu_k(t)]
=
\E\Big[\frac{\sum_{m=1}^{N_k(t)-1} X_{k,m} }{N_k(t)-1}\Big]
=
\sum_{n=2}^\infty \Prob{ N_k(t)\!=\!n } 
\E\Big[ \frac{\sum_{m=1}^{n-1} X_{k,m} }{n-1} \Big| N_k(t) \!=\! n \Big]
=
\Prob{N_k(t)\!>\!1}\, \EE{X}
$,
because $\hmu_k(t)\!=\!0$ when $N_k(t)\!\le\! 1$.

%
%

Using \Cref{ass:cost}, a standard argument of \emph{renewal reward processes}
gives $L_k(t)\! \sim \!V_k/(t/\delta_k) \!=\! V_k \delta_k/t$, 
where $f(t)\!\!\sim\!\!g(t)$ means $\lim_{t\ra\infty} f(t)/g(t) \!=\! 1$.  
Intuitively, estimator $k$ will produce approximately $t/\delta_k$ 
independent observations during $[0,t)$;
hence, the variance of their average is approximately $V_k/(t/\delta_k)$ 
(made more precise in the proof of \cref{thm:nonunifreduction}).
This implies
$\min_{1\le k \le K} L_k(t) \!\sim\! 
\min_{1\le k \le K} \frac{\delta_k V_k }{t}$.
Thus, any allocation strategy $\A$ competing
with the best estimator must draw most of its observations from 
$k$ satisfying
$\delta_k V_k \!=\! \delta^* V^*
\!\defined\!\min_{1\le k \le K} \frac{\delta_k V_k }{t}$. 
For simplicity, we assume 
$k^*\!=\!{\arg\min}_{1\le k \le K} \delta_k V_k$ is unique, 
with $\delta^*\!=\!\delta_{k^*}$ and $V^*=V_{k^*}$.


As before, we will consider adaptive strategies that estimate 
$\mu$ using the mean of the observations:
$\hmu_m \!=\! \frac{S_m}{m}$ such that $S_m\!\defined\!\sum_{s=1}^m Y_s$.
Hence, the estimate at time $t\ge J_1$ is
\vspace*{-0.1cm}
\begin{equation}
\label{eq:hmu}
\hmu(t) = \frac{S(t)}{N(t)-1}, \text{ where } S(t) = S_{N(t)-1}\,.
\vspace*{-0.1cm}
\end{equation}
Our aim is to bound regret of the overall algorithm by bounding the 
number of times the allocation strategy chooses suboptimal estimators. 
We will do so by generalizing \eqref{eq:regretdecomposition} 
to the nonuniform-cost case, but unlike the equality obtained 
in \cref{thm:regretmatch}, here we provide an upper bound. 


\begin{theorem}
\label{thm:nonunifreduction}
Let~\Cref{ass:cost} hold and assume that $(X_{k,m})$ are bounded 
%
%
and $k^*$ is unique.
Let the estimate of $\A$ at time $t$ be defined by the sample mean $\hmu(t)$.
Let $t\!>\!0$ be such that $\EE{N(t)-1}\!>\!0$ and $\EE{N_{k^*}(t)}\!>\!0$, 
and assume that for any $k\!\neq\! k^*$,
$\EE{T_k(N(t))}\!\le\! f(t)$ for some $f\!:\!(0,\infty)\! \to\! [1,\infty)$
such that $f(t)\!\le\! c_f t$ for some $c_f\!>\!0$ and any $t\!>\!0$.
Assume furthermore that $\Prob{D_{k,1}\!>\!t}\! \le\! C_D t^{-2}$ 
and $\EE{N_{k^*}(t)^2} \!\le\! C_N t^2$ for all $t \!>\! 0$.
Then, for any $c \!<\! \sqrt{t/(8 \delta_{\max})}$ 
where $\delta_{\max}\!=\!\max_k \delta_k$, 
the regret of $\A$ at time $t$ is bounded by
\vspace*{-0.1cm}
\begin{align}
\lefteqn{R(\A,t) \,\le\, (c+C) \sqrt{t}\,+\,C' f(t)} 
\nonumber
\\
& 
\hspace*{0mm}
+\,  C'' t^2 
\,
\mathbb{P}\Big(
N_{k^*}(t)\!>\!\EE{ N_{k^*}(t)} + c \sqrt{ \EE{N_{k^*}(t)-1}}  
\Big)
\nonumber
\\
& 
\hspace*{0mm}
+\,  C''' t^2 
\,
\mathbb{P}\Big(
N(t)\!<\!\EE{ N(t)} - c \sqrt{ \EE{N(t)-1}}
\Big)
,
\label{eq:bnd5}
\end{align}
for some appropriate constants $C,C',C'', C'''\!>\!0$ 
that depend on the problem parameters 
$\delta_k, V_k$, the upper bound on $|X_{k,m}|$, 
and the constants $c_f$, $C_D$ and $C_N$.
\end{theorem}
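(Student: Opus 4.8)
The plan is to bound the deterministic quantity $R(\A,t)=t^2\big(L(\A,t)-\min_k L_k(t)\big)$ by splitting each of the two expectations $L(\A,t)=\E[(\hmu(t)-\mu)^2]$ and $L_{k^*}(t)=\E[(\hmu_{k^*}(t)-\mu)^2]$ over a ``typical'' event and its complement. Writing $M\defined N(t)-1$ for the number of observations $\A$ has completed by time $t$, recall that $\hmu(t)-\mu=\frac1M\sum_{k=1}^K \tilde S_{k,T_k(M)}$, where $\tilde S_{k,n}\defined\sum_{j=1}^n(X_{k,j}-\mu)$ is arm $k$'s centred partial sum. Introduce the good events $G_2=\{N(t)\ge \E[N(t)]-c\sqrt{\E[N(t)-1]}\}$ (for $\A$) and $G_1=\{N_{k^*}(t)\le \E[N_{k^*}(t)]+c\sqrt{\E[N_{k^*}(t)-1]}\}$ (for the baseline), whose complements carry exactly the probabilities $P_2,P_1$ in \eqref{eq:bnd5}. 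Because $(X_{k,m})$ is bounded, $(\hmu(t)-\mu)^2$ is bounded by a constant, so $t^2\E[(\hmu(t)-\mu)^2\one{G_2^c}]\le C'''t^2 P_2$; this is the last term, and analogously restricting the baseline's expectation to $G_1$ and discarding $G_1^c$ costs $C''t^2P_1$, the third term. The remaining work is the main-order comparison on the good events.

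For the baseline I would lower-bound $\min_k L_k(t)$. The asymptotic $L_k(t)\sim V_k\delta_k/t$ recalled above, together with the uniqueness of $k^*=\arg\min_k\delta_kV_k$, identifies $k^*$ as the relevant minimizer, so it suffices to lower-bound $L_{k^*}(t)$. When $\hmu_{k^*}(t)$ is an unbiased average (e.g.\ when $X_{k^*}$ and $D_{k^*}$ are independent, as discussed above), $L_{k^*}(t)=V^*\,\E[\tfrac1{N_{k^*}(t)-1}\one{N_{k^*}(t)>1}]$, and restricting this expectation to $G_1$---where $N_{k^*}(t)-1\le \E[N_{k^*}(t)-1]+c\sqrt{\E[N_{k^*}(t)-1]}$---turns $1/(N_{k^*}(t)-1)$ into a deterministic lower bound. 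The standing assumption $c<\sqrt{t/(8\delta_{\max})}$ guarantees this deviation is dominated by the mean, and the renewal estimate $\E[N_{k^*}(t)-1]=t/\delta^*+O(\sqrt t)$ (its $O(\sqrt t)$ correction being governed by the borderline moment bound $\Prob{D_{k^*,1}>t}\le C_D t^{-2}$, which controls the residual delay) then yields $t^2\min_k L_k(t)\ge V^*\delta^* t-(c+C)\sqrt t$, up to the already-accounted $P_1$ correction.

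To upper-bound $L(\A,t)$ on $G_2$ I would split $\hmu(t)-\mu=A+B$ with $A=\tilde S_{k^*,T_{k^*}(M)}/M$ (optimal arm) and $B=\frac1M\sum_{k\neq k^*}\tilde S_{k,T_k(M)}$ (suboptimal arms), and expand $(\hmu(t)-\mu)^2=A^2+2AB+B^2$. Boundedness gives $|\tilde S_{k,n}|\le 2B_0 n$, so $|B|\le \tfrac{2B_0}{M}\sum_{k\neq k^*}T_k(M)$; combined with the lower bound on $M$ available on $G_2$ and the hypothesis $\E[T_k(N(t))]\le f(t)$, the terms $\E[B^2]$ and (via Cauchy--Schwarz) $\E[2AB]$ contribute the $C'f(t)$ term. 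For the leading $A^2$ term I would use Wald's second identity to evaluate $\E[\tilde S_{k^*,T_{k^*}(M)}^2]=V^*\,\E[T_{k^*}(M)]$, so $\E[A^2]=V^*\E[T_{k^*}(M)/M^2]$; since $T_{k^*}(M)\le M$ and $\A$'s time lost to suboptimal arms is only $O(f(t))$, the effective optimal-sample count $T_{k^*}(M)$ trails the ideal by at most the suboptimal pulls, so matching $\E[A^2]$ against the baseline $V^*\delta^*/t$ again produces $C'f(t)$ plus the $(c+C)\sqrt t$ concentration/renewal correction (here $\E[N_{k^*}(t)^2]\le C_N t^2$ controls the second-moment factors). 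Collecting the three regions gives \eqref{eq:bnd5}.

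The main obstacle is the data-dependent coupling of all the counts. The quantities $N(t)$, $M$, and $T_k(M)$ are determined jointly by the allocation rule and by \emph{every} arm's observation--delay sequence, so $T_{k^*}(M)$ is not a stopping time with respect to $k^*$'s own filtration; applying Wald's second identity to $\E[\tilde S_{k^*,T_{k^*}(M)}^2]$ and decoupling $N_{k^*}(t)$ (a functional of $k^*$'s delays alone) from $k^*$'s \emph{values} therefore demand a careful martingale/optional-stopping argument rather than a direct citation. Compounding this, the random denominator $M=N(t)-1$ can be atypically small, which is exactly why the concentration events $G_1,G_2$ and the moment hypotheses on $D_{k,1}$ and $N_{k^*}(t)$ are forced into the statement; reconciling the $1/M$ (and $1/M^2$) factors with Wald's count-linear identity uniformly on the good event is the delicate technical step.
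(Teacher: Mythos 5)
Your overall architecture matches the paper's: the good events $G_1,G_2$ controlling the random denominators $N_{k^*}(t)-1$ and $N(t)-1$ are exactly its Lemma~3, the renewal bounds on $\EE{N_k(t)}$ via truncation of $D_{k,1}$ at a level $\beta$ are its Lemma~4, and discarding the complements at cost $C''t^2P_1+C'''t^2P_2$ is how the last two terms of \eqref{eq:bnd5} arise. However, two of your main-order steps fail as stated. First, the baseline: the identity $L_{k^*}(t)=V^*\,\EE{\one{N_{k^*}(t)>1}/(N_{k^*}(t)-1)}$ requires the counting process $N_{k^*}(t)$ to be independent of the observation values, but \Cref{ass:cost} and the theorem allow $(X_{k,m},D_{k,m})$ to be correlated within a pair, in which case $\hmu_{k^*}(t)$ is generally biased and this identity is false; you flag this as a special case, but the theorem is not restricted to it. The paper avoids it by lower-bounding $\EE{(S_{k^*}(t)-(N_{k^*}(t)-1)\mu)^2}/\EE{N_{k^*}(t)-1}^2$ directly, applying Wald's second identity to $F=\sum_{m\le N_{k^*}(t)}X_{k^*,m}-N_{k^*}(t)\mu$ at the stopping time $N_{k^*}(t)$ with respect to the \emph{joint} filtration $\sigma(X_{k^*,1},D_{k^*,1},\ldots)$, paying only a $2B\sqrt{\EE{N_{k^*}(t)}V^*}$ Cauchy--Schwarz correction for the off-by-one; no independence is needed.

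Second, your handling of the suboptimal-arm term $B$ and of the cross term loses too much. Bounding $|\tilde S_{k,n}|\le 2B_0n$ and using $M\gtrsim t$ on $G_2$ gives $t^2\,\EE{B^2\one{G_2}}\lesssim \EE{(\sum_{k\ne k^*}T_k(M))^2}$, a \emph{second} moment of the suboptimal pull counts that the hypothesis $\EE{T_k(N(t))}\le f(t)$ does not control; even granting perfect concentration it yields $f(t)^2$ with no residual $1/t$, which is $\Omega(t)$ for $f(t)=\sqrt t$ and destroys the bound. Likewise, Cauchy--Schwarz on $\EE{2AB}$ gives $t^2\sqrt{\EE{A^2}\EE{B^2}}\approx\sqrt{t\,f(t)}$, which already for $f(t)=\log t$ exceeds the claimed $(c+C)\sqrt t+C'f(t)$. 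The repair --- and what the paper's Lemma~5 does --- is to pull the denominator out \emph{first} on the good event (there $1/(N(t)-1)^2$ is bounded by a constant over $t^2$) and only then expand the numerator $\big(\sum_k\{S_{k,T_k(N(t))}-T_k(N(t))\mu\}-(Y_{N(t)}-\mu)\big)^2$: the arm-by-arm cross terms vanish exactly by the same martingale argument as in \cref{thm:regretmatch}, and Wald's second identity makes each arm's contribution $V_k\,\EE{T_k(N(t))}\le V_kf(t)$, i.e., \emph{linear} in the pull count. With these two substitutions your plan coincides with the paper's proof; without them the stated decomposition does not reach \eqref{eq:bnd5}.
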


\vspace{-0.1cm}

The proof of the theorem is given in \cref{sec:apx-nonuniform}.
Several comments are in order.
First,
recall that the optimal regret rate is order $\sqrt{t}$ in this setting,
up to logarithmic factors. 
To obtain such a rate, one need only achieve $f(t)\! =\! O(\sqrt{t})$,
which can be attained by stochastic or even adversarial bandit algorithms 
\citep{bubeck2012regret} 
receiving rewards with expectation $-\delta_k V_k$
and a well-concentrated number of samples.
The moment condition on $N_{k^*}(t)$ is also not restrictive; 
for example, if the estimators are rejection samplers, their sampling times 
will have a geometric distribution that satisfies the polynomial tail condition.
Furthermore, if $D_{k,m}\!\ge\!\delta^-$ 
%
%
for some $\delta^-\!>\!0$ then $N_k(t)\!<\!t/\delta^-$ for all $k$, 
which ensures the moment condition on $N_{k^*}(t)$.

Although it was sufficient to use the negative second moment $-X_{k,m}^2$
instead of variance as the bandit reward under uniform costs,
this simplification is no longer possible when costs are nonuniform,
since $\delta_k V_k\!=\! \delta_k(\E[X_{k,1}^2]-\mu^2)$ 
now involves the unknown expectation $\mu$.
Several strategies can be followed to bypass this difficulty. 
For example, given independent costs and observations,
one can use each bandit algorithm decision twice, 
feeding rewards 
$r_{k,m}\!=\!-\tfrac{1}{4}(D_{k,2m}\!+\!D_{k,2m+1})(X_{k,2m}\!-\!X_{k,2m+1})^2$
whose expectation is $\delta_k V_k$. 
Similar constructions using slightly more data can be used 
for the dependent case.

Note that ensuring $\E[T_k(N(t))]\! \le\! f(t)$ 
can be nontrivial.
Typical guarantees for UCB-type algorithms ensure that the expected 
number of pulls to a suboptimal arm $k$ in $n$ rounds is bounded by a 
function $g_k(n)$. 
However, due to their dependence, $\E[T_k(N(t))]$ cannot generally 
be bounded by $g_k(\E[N(t)])$. 
Nevertheless, if, for example, 
$D_{k,m}\geq \delta^{-}$ for some 
$\delta^{-}\!>\!0$, then $N(t)\!-\!1\! \le\! t/\delta^{-}$, 
hence $f_k(t)\! =\! g_k(t/\delta^{-})$ can be used.

Finally, we need to ensure that the last two terms in \eqref{eq:bnd5} 
remain small, 
which follows if $N(t)$ and $N_{k^*}(t)$ concentrate around their means. 
In general,
$\Prob{N\! <\! \E[N] \!-\! C \sqrt{ \E[N] \log(1/\delta)} }\le \delta$ 
for some constant $C$,
therefore $c\!=\! C \sqrt{\log(1/\delta)}$ can be chosen to achieve
$t^2 \Prob{N\!<\!\E[N]\!-\!c\sqrt{\E[N]}} \le t^2 \delta$, 
hence by chosing $\delta$ to be $O(t^{-3/2})$ we achieve
$\tilde{O}(\sqrt{t})$ regret. 
However, to ensure concentration, the allocation strategy must also 
select the optimal estimator most of the time.
For example, \citet{audibert2009exploration} show that with default 
parameters, UCB1 and UCB-V will select suboptimal arms with probability 
$\Omega(1/n)$, making $t^2 \Prob{N<\E[N]-c\sqrt{\E[N]}} = \Omega(t)$. 
However, by increasing the constant $2$ in UCB1 and the parameter 
$\zeta$ in UCB-V,
it follows from \citep{audibert2009exploration} that the chance of using 
\emph{any} suboptimal arm more than $c \log(t)\sqrt{t}$ times can be made 
smaller than $c/t$ (where $c$ is some problem-dependent constant).
%
 Outside of this small probability event, the optimal arm is used 
$t-c K \log(t) \sqrt{t}$ times,
which is sufficient to show concentration of $N(t)$.
In summary, we conclude that $\tilde{O}(\sqrt{t})$ regret can be achieved in \cref{thm:nonunifreduction}
under reasonable assumptions.

\if0
For example, if $D_{k,m}\in [\delta_{k}^-,\delta_k^+]$ and $\delta_k^->0$, $N(t)$ will be tightly concentrated around its mean (it will have subgaussian tails).
In this case,  we can choose $c =  c'\sqrt{\log(t)}$ (where $c'$ is a problem dependent constant) to get that 
$\Probs{N(t)<\EE{N(t)}-c\sqrt{\EE{N(t)-1}}}\le t^{-3/2}$, thus making the last term of the same order as the rest.
\fi

\vspace{-0.2cm}

\section{Experiments}
\label{sec:exp}

We conduct experimental investigations in a number of scenarios to
better understand the effectiveness of multi-armed bandit algorithms
for adaptive \MC estimation.


\subsection{Preliminary Investigation: A 2-Estimator Problem}

We first consider the performance of allocation strategies on a simple
2-estimator problem.  Note that this evaluation differs from standard
evaluations of stochastic bandits through the absence of
single-parameter payoff distributions, such as the Bernoulli, which
cannot have identical means yet different variances.  This is an
important detail, since stochastic bandit algorithms such as KL-UCB
and TS are often evaluated on single-parameter payoff distributions,
but their advantages in such scenarios might not extend to adaptive
\MC estimation.

\begin{figure}[t]
\centering
\includegraphics[width=\columnwidth]{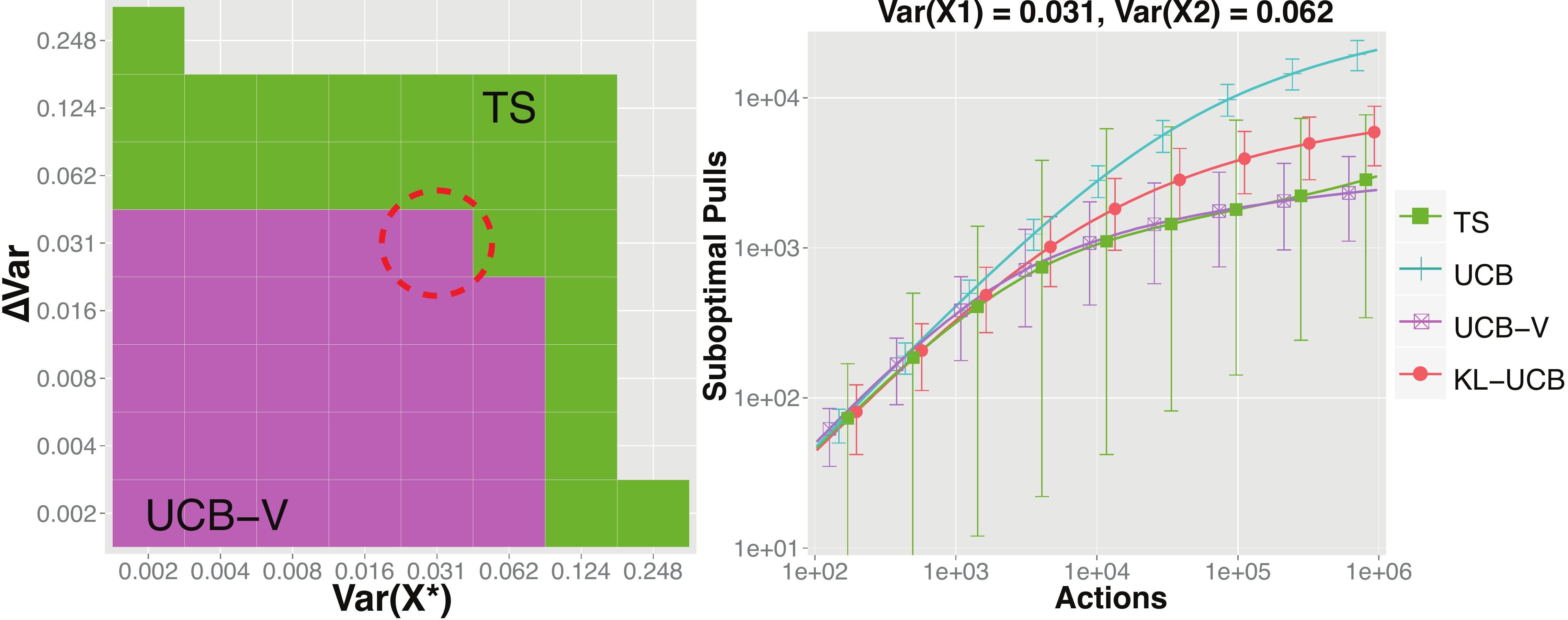}
\vspace{-0.75cm}
\caption{ 
{\bf Left}: 
Tile plot indicating which approach achieved lowest
regret (averaged over 2000 independent runs) in the 2-estimator
scaled-Bernoulli setting, at time $10^6$. 
X-axis is the variance of the optimal estimator, 
and Y-axis is the \emph{additional} variance on the second estimator. 
{\bf Right}: 
Log-plot illustrating the expected number of suboptimal selections
for the highlighted case (dashed red circle). 
Error bars indicate 99\% empirical percentiles.
}
\vspace{-0.15cm}
\label{fig:grid1}
\end{figure}

\begin{figure*}[ht]
\centering
\includegraphics[width=0.99\textwidth]{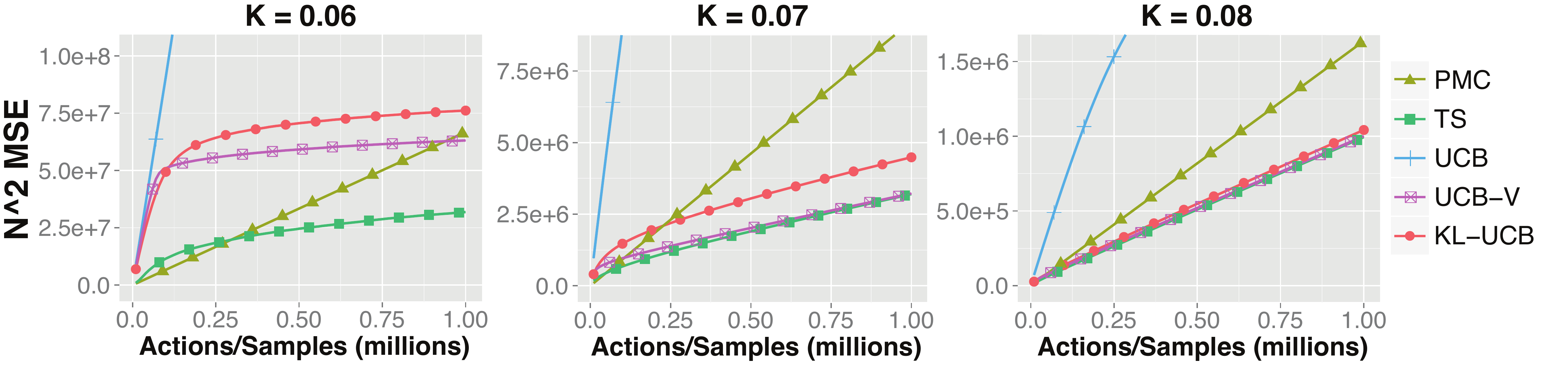}
\vspace{-0.4cm}
\caption{
Plots showing the normalized MSE of different adaptive strategies 
for estimating the price of a European caplet option under the
Cox-Ingersol-Ross interest rate model, using different strike prices (K). 
All results are statistically significant up to visual resolution. 
}
\vspace{-0.34cm}
\label{fig:cir}
\end{figure*}

In particular, we consider problems when $X_{k,t}=\mu + s_k(Z_{t} -
\frac{1}{2})$, where $Z_t$ is standard Bernoulli and $s_k\! \in\!
(0,1)$ is a separate scale parameter for $k\!\in\!\{1,2\}$.  This
design permits the maximum range for variance around a mean within a
bounded interval.  We evaluated the four bandit strategies detailed in
\cref{sec:background}: UCB1, UCB-V, KL-UCB, and TS, where for UCB-V we
used the same settings as \citep{audibert2009exploration}, and for TS
we used the uniform Beta prior, i.e., $\alpha_0 = 1$ and $\beta_0 = 1$.

The relative performance of these approaches is reported in
\Cref{fig:grid1}.  TS appears best suited for scenarios where
\emph{either estimator} has high variance, whereas UCB-V is more
effective when faced with medium or low variance estimators.
Additionally, KL-UCB out-performs UCB-V in high variance settings, but
in all such cases was eclipsed by TS.

\vspace{-0.2cm}
\subsection{Option Pricing}

We next consider a more practical application of adaptive \MC 
estimation to the problem of pricing financial instruments.
In particular, following \citep{douc2005,arouna2004}, 
we consider the problem of pricing \emph{European call options} 
under the assumption that the interest rate evolves in time according 
to the Cox-Ingersoll-Ross (CIR) model \cite{cox1985theory}, 
a popular model in mathematical finance
(details provided in \cref{sec:appendix-finance-example}).
In a nutshell, this model assumes that the interest rate $r(t)$, 
as a function of time $t\!>\!0$,
follows a \emph{square root diffusion model}.
The price of a European caplet option with ``strike price'' $K\!>\!0$,
``nominee amount'' $M\!>\!0$ and ``maturity'' $T\!>\!0$ 
is then given by 
$P = M\exp(-\int_0^T r(t)dt)\max(r(T) - K, 0)$.
The problem is to determine the expected value of $P$.

A naive approach to estimating $\EE{P}$ is to simulate independent
realizations of $r(t)$ for $0<t\leq T$.  However, any simulation where
the interest rate $r(T)$ lands below the strike price can be ignored
since the payoff is zero.  Therefore, a common estimation strategy is
to use importance sampling by introducing a ``drift'' parameter
$\theta>0$ into the proposal density for $r(t)$, with $\theta=0$
meaning no drift; this encourages simulations with higher interest
rates.  The importance weights for these simulations can then be
efficiently calculated as a function of $\theta$ (see
\cref{sec:appendix-finance-example}).


Importantly, the task of adaptively allocating trials between
different importance sampling estimators has been previously studied
on this problem, using an unrelated technique known as \emph{d-kernel
  Population Monte Carlo~(PMC)}~\cite{douc2005}.  Space restrictions
prevent us from providing a full description of the PMC method, but,
roughly speaking, the method defines the proposal density as mixture
over the set $\{\theta_k\}$ of drift parameters considered.  At each
time step, PMC samples a new drift value according to this mixture and
then simulates an interest rate.  After a fixed number of samples, say
$G$ (the \emph{population} size), the mixture coefficient $\alpha_k$
of each drift parameter $\theta_k$ is adjusted by setting it to be
proportional to the sum of importance weights sampled from that
parameter: $\alpha_k = \frac{\sum_{t=1}^G w_t\indicate{I_t =
    k}}{\sum_{t=1}^G w_t}$.  The new proposal is then used to generate
the next population.

We approximated the option prices under the same parameter settings as
\citep{douc2005},
namely, $\nu=0.016$, $\kappa\!=\!0.2$, $r_0 \!=\! 0.08$,
$T=1$, $M\!=\!1000$, $\sigma \!=\! 0.02$, and $n \!=\! 100$, for
different strike prices $K\!=\!\{ 0.06, 0.07, 0.08\}$
(see \cref{sec:appendix-finance-example}).
However, we consider a wider set of proposals
given by $\theta_k = k/10$ for $k\!\in\!\{0,1,...,15\}$.  
The results averaged over
1000 simulations are given in \Cref{fig:cir}.

These results generally indicate that the more effective bandit
approaches are significantly better suited to this allocation task
than the PMC approach, particularly in the longer term.  Among the
bandit based strategies, TS is the clear winner, which, given the
conclusions from the previous experiment, is likely due to high level
of variance introduced by the option pricing formula.  Despite this
strong showing for the bandit methods, PMC remains surprisingly
competitive at this task, doing uniformly better than UCB, and better
than all other bandit allocation strategies early on for $K=0.06$.
However, we believe that this advantage of PMC stems from the fact that
 PMC explore the
entire space of mixture distribution (rather than  single $\theta_k$).  
It remains an interesting area for future work in bandit-based allocation
strategies to extend the existing methods to continuously parameterized
settings.


\vspace{-0.2cm}
\subsection{Adaptive Annealed Importance Sampling}

\begin{figure*}[t]
\centering
\includegraphics[width=0.99\textwidth]{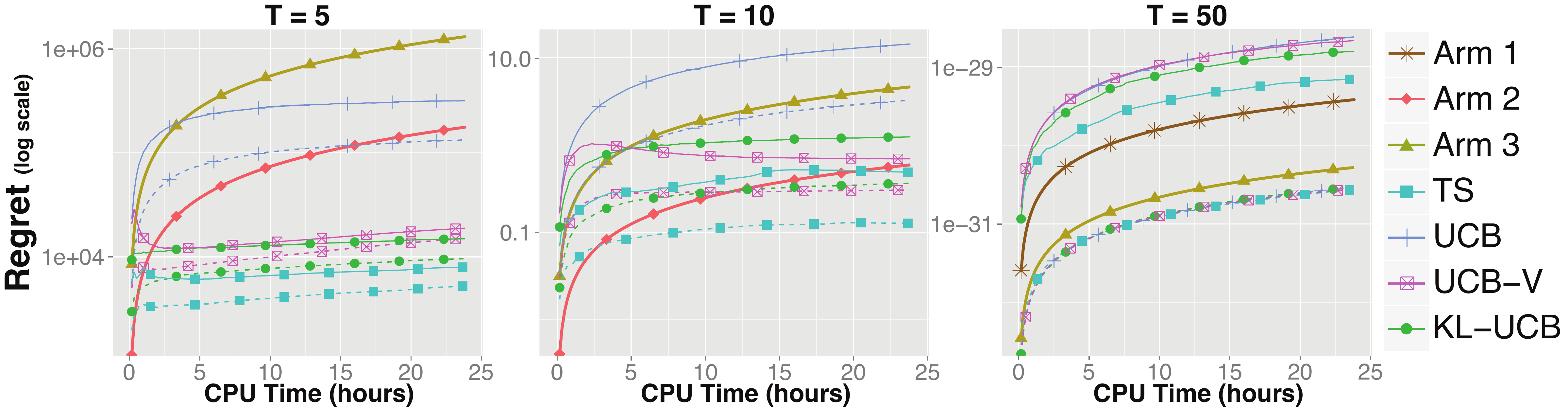}
\vspace{-0.4cm}
\caption{
Plots showing the average regret (over 20 independent runs)
of bandit allocators on the Logistic Regression model.
$T$ is training sample size, ``Arm 1/2/3'' indicates each fixed estimator;
and the one missing from a figure is the best for that setting.
The solid lines indicate the performance of combining observations uniformly,
whereas the dashed lines indicate the performance of combining observations
using \emph{inverse variance weights}.
}
\vspace{-0.34cm}
\label{fig:lr}
\end{figure*}

Many important applications of \MC estimation occur in Bayesian inference,
where a particularly challenging problem is evaluating the
\emph{model evidence} of a latent variable model.
Evaluating such quantities is useful for a variety of purposes,
such as Bayesian model comparison and testing/training set evaluation
\citep{robert2012bayesian}.  
However, the desired quantities are notoriously difficult to estimate
in many important settings,
due in no small part to the fact that popular high-dimensional
Monte Carlo strategies, such as Markov Chain Monte Carlo~(MCMC) methods, 
cannot be directly applied \citep{Neal2005}.

Nevertheless,
a popular approach for approximating such values is 
\emph{annealed importance sampling}~(AIS) \citep{Neal2001} (or more
generally sequential Monte Carlo samplers \citep{del2006}).  In a
nutshell, AIS combines the advantages of importance sampling with MCMC
by defining a proposal density through a sequence of MCMC transitions
applied to a sequence of \emph{annealed} distributions, which slowly
blend between the proposal (prior) and the target (un-normalized
posterior).
While such a technique can offer impressive practical advantages, 
it often requires considerable effort to set parameters;
in particular, the practitioner must specify the number of annealing steps, 
the annealing rate or ``schedule'', the underlying MCMC method 
(and its parameters), and the number of MCMC transitions to execute at
annealing step.  
Even when these parameters have been appropriately tuned on preliminary data,
there is no assurance that these choices will remain effective when 
deployed on larger or slightly different data sets.

Here we consider the problem of approximating the normalization
constant for a Bayesian logistic regression model on different sized
subsets of the 8-dimensional \emph{Pima Indian diabetes} UCI data
set~\cite{Bache+Lichman:2013}.  
We consider the problem of allocating resources between three AIS estimators 
that differ only in the number of annealing steps they use;
namely, 400, 2000, and 8000 steps.
In each case, we fix the annealing schedule using the \emph{power of 4}
heuristic suggested by \citep{kuss2005}, with a single slice sampling
MCMC transition 
used at each step \citep{neal2003slice} 
(this entails one ``step'' in each of the 8 dimensions);
see \cref{sec:appendix-lr-example} for further details.



A key challenge in this scenario is that the computational costs associated 
with each arm differ substantially, and, because slice sampling uses an
internal rejection sampler, these costs are stochastic.  
To account for these costs we directly use \emph{elapsed CPU-time}
when drawing a sample from each estimator, as reported by the 
\textsc{java vm}.  
This choice reflects the true underlying cost and is particularly convenient
since it does not require the practitioner to implement special 
accounting functionality.
Since we do not expect this cost to correlate with the sample returns, 
we use the independent costs payoff formulation from \cref{sec:nonuniformcost}:
$-\tfrac{1}{4}(D_{k,2m}+D_{k,2m+1})(X_{k,2m}-X_{k,2m+1})^2$.

The results for the different allocation strategies for training sets
of size 5, 10, and 50 are shown in \Cref{fig:lr}.  
Perhaps the most striking result is the performance improvement achieved 
by the \emph{nonuniformly combined estimators},
which are indicated by the dashed lines.  
These estimators do not change the underlying allocation;
instead they improve the final combined estimate by weighting each observation
inversely proportional to the sample variance of the estimator that produced it.
This performance improvement is an artifact of the
nonuniform cost setting, since arms that are very close in terms of
$V_k\delta_k$ can still have considerably different variances, 
which is especially true for AIS.
Also observe that no one arm is optimal for all three training set sizes, 
consequently, we can see that bandit allocation
(Thompson sampling in particular) is able to outperform any static strategy.  
In practice, this implies that even after exhaustive
parameter tuning, automated adaptation can still offer considerable benefits
simply due to changes in the data.

\vspace*{-0.3cm}

\section{Conclusion}

In this paper we have introduced a new sequential decision making strategy
for competing with the best consistent \MC estimator in a finite pool.
When each base estimator produces unbiased values at the same cost, 
we have shown that the sequential estimation problem maps to a 
corresponding bandit problem, allowing
 future improvements in bandit algorithms to be transfered
 to combining unbiased estimators.
We have also shown a weaker reduction for problems where the different 
estimators take different (possibly random) time to produce an observation.

We expect this work to inspire further research in the area. 
For example, one may consider combining not only finitely many, 
but infinitely many estimators using appropriate bandit techniques
\citep{bubeck2011}, and/or exploit the fact that the observation 
from one estimator may reveal information about the variance of others.
This is the case for example when the samplers use importance sampling, 
leading to the (new) stochastic variant of the problem known as 
``bandits with side-observations'' \citep{MaSha11,AlCBGeMa13}.
However, much work remains to be done,
such studying in detail the use variance weighted estimators,
dealing with continuous families of estimators,
or a more thorough empirical investigation of the alternatives
available.

\subsection*{Acknowledgements}
{\footnotesize
This work was supported by the Alberta Innovates Technology
Futures and NSERC. 
Part of this work was done while Cs.Sz. was visiting
 Technion, Haifa and Microsoft Research, Redmond,
whose support and hospitality are greatly acknowledged.
}

\newpage 

\footnotesize

\bibliographystyle{icml2014} 
\bibliography{paper}

\normalsize

\newpage
\onecolumn
\appendix

\section{Proofs for \cref{sec:uniformcost}}
\label{sec:apx-uniform}
In this section we provide the proof of \cref{thm:regretmatch}.
For the proof, we will need the following two lemmas:

\begin{lemma}[Optional Sampling]
\label{lem:i.i.d.SequentialChoice}
Let $(X_{t})_{t \in \N}$ be a sequence of i.i.d. random variables, and $(X'_{t})_{t \in \N}$ be its subsequence such that the decision whether to include $X_{t}$ in the subsequence is independent of future values in the sequence, i.e., $X_{s}$ for $s \ge t$. Then the sequence $(X'_{t})_{t \in \N}$ is an i.i.d. sequence with the same distribution as $(X_{t})_{t \in \N}$.
\end{lemma}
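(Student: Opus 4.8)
The plan is to reduce the statement to matching all finite-dimensional distributions of the selected subsequence with those of an i.i.d.\ copy, and then to establish this by induction on the dimension. First I would encode the selection rule by indicators $\epsilon_t \in \{0,1\}$, where $\epsilon_t = 1$ iff $X_t$ is kept; the hypothesis says precisely that $\epsilon_t$ is independent of $(X_s)_{s\ge t}$, so $\epsilon_t$ is a function of the strict past $X_1,\ldots,X_{t-1}$ (possibly together with external randomness independent of $(X_t)_t$). Writing $\tau_1<\tau_2<\cdots$ for the successive times at which $\epsilon_t=1$, the kept subsequence is $X'_j = X_{\tau_j}$, and I would note that the value of $\tau_j$ being $t$ is determined by $\epsilon_1,\ldots,\epsilon_t$, hence by $X_1,\ldots,X_{t-1}$ (and the external randomness). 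Since the subsequence is assumed infinite, $\tau_j<\infty$ almost surely for every $j$. Letting $\mu$ denote the common law of $X_1$, it then suffices to show $\Prob{X'_1\in B_1,\ldots,X'_m\in B_m} = \prod_{j=1}^m \mu(B_j)$ for all $m$ and all Borel sets $B_1,\ldots,B_m$.

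The induction peels off the most recently selected value. Fixing $m$, I condition on the time of the $m$-th selection: for each $t$, the event $E_t = \{X'_1\in B_1,\ldots,X'_{m-1}\in B_{m-1},\ \tau_m=t\}$ specifies the first $m-1$ kept values together with the requirement that the $m$-th keep occurs exactly at time $t$. Because $\tau_{m-1}<\tau_m=t$, all of $X'_1,\ldots,X'_{m-1}$ are read off at times strictly before $t$, and $\{\tau_m=t\}$ is measurable with respect to $\epsilon_1,\ldots,\epsilon_t$; consequently $E_t$ lies in the $\sigma$-field generated by $X_1,\ldots,X_{t-1}$ (and the external randomness), which is independent of $X_t$. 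Hence $\Prob{E_t,\ X_t\in B_m} = \Prob{E_t}\,\mu(B_m)$, and since $X'_m = X_{\tau_m} = X_t$ on $E_t$, summing over $t$ (using $\Prob{\tau_m<\infty}=1$) gives $\Prob{X'_1\in B_1,\ldots,X'_m\in B_m} = \mu(B_m)\,\Prob{X'_1\in B_1,\ldots,X'_{m-1}\in B_{m-1}}$. The inductive hypothesis then yields the product form, with the base case $m=1$ being the same computation with $E_t = \{\tau_1 = t\}$ and no prior conditioning events.

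The step I expect to be the crux is the independence claim $E_t \perp X_t$: it is exactly here that the ``no peeking into the future'' hypothesis is used, since it guarantees that the selection decision at time $t$ (and therefore the entire description of what has been kept up to and including this selection) does not involve $X_t$ itself, while the i.i.d.\ assumption supplies independence of $X_t$ from $X_1,\ldots,X_{t-1}$. A secondary point to verify is that the kept subsequence is almost surely infinite, so that each $\tau_j$ is finite and the sum over $t$ exhausts the probability mass; this is implicit in the lemma's indexing of $(X'_t)$ by $t\in\N$ and would otherwise need to be added as an explicit hypothesis.
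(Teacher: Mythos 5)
Your proof is correct. The paper does not actually supply an argument here: it disposes of the lemma by citing Doob's optional sampling theorem (Theorem 5.2, Chapter III of Doob, 1953). What you have written is, in essence, the classical direct proof of that theorem — encode the selection by indicators, condition on the time $\tau_m$ of the $m$-th selection, observe that the event $E_t=\{X'_1\in B_1,\ldots,X'_{m-1}\in B_{m-1},\,\tau_m=t\}$ is measurable with respect to the information available strictly before time $t$ and hence independent of $X_t$, and peel off one factor of $\mu(B_m)$ per induction step. So the two routes buy different things: the citation is shorter and delegates the measure-theoretic bookkeeping to a standard reference, while your argument makes the lemma self-contained and makes visible exactly where the ``no peeking into the future'' hypothesis enters. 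One point worth making explicit: the lemma as stated says only that each selection decision is independent of $(X_s)_{s\ge t}$, and this pairwise independence alone would not give you $E_t\perp X_t$ (the event $E_t$ mixes the decisions with the past observations). You need the stronger — but clearly intended, and standard in Doob's formulation — reading that $\epsilon_t$ is measurable with respect to $\sigma(X_1,\ldots,X_{t-1})$ together with exogenous randomness independent of the whole sequence; you state this upgrade explicitly at the outset, which is the right move, and it is also exactly how the lemma is used in the proof of Theorem 1, where $I_n$ is determined by the history $\F_{n-1}$. Your secondary caveat, that $\Prob{\tau_j<\infty}=1$ is needed for the sum over $t$ to exhaust the mass, is likewise apt and is implicit in the paper's indexing of the subsequence by $\N$.
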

\begin{proof}
See Theorem~5.2 in Chapter III on page 145 of \citep{Doob:1953}.
\end{proof}
We also need Wald's second identity:
\begin{lemma}[Wald's Second Identity]
\label{lem:wald2}
Let $(X_{t})_{t \in \N}$ be a sequence of $(\FF_t; t\ge 1)$-adapted random variables
such that $\EE{ X_t | \FF_{t-1} } = \mu$ and $\EE{ (X_t - \mu)^2 | \FF_{t-1} } = \sigma^2$ for any $t\ge 1$.
Let $S_n = \sum_{t=1}^n X_t$ be the partial sum of the first $n$
random variables ($n\ge 1$) and $\tau>0$ be some stopping time
w.r.t. $(\FF_t; t\ge 1)$.%
\footnote{$\tau$ is called a stopping time w.r.t. $(\FF_t; t\ge 1)$ if $\{\tau \le t\} \in \FF_t$ for all $t \ge 1$.}
Then,
\[
\EE{ (S_{\tau} - \mu \tau)^2 } = \sigma^2  \EE{ \tau}\,.
\]
\end{lemma}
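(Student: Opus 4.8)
The plan is to recognize Wald's second identity as an instance of the optional stopping theorem applied to a suitable second-order martingale. First I would center the partial sums by setting $M_n \defined S_n - \mu n = \sum_{t=1}^n (X_t-\mu)$ with $M_0 \defined 0$. Since $\EE{X_t \mid \FF_{t-1}} = \mu$, the process $(M_n)_{n \ge 0}$ is an $(\FF_n)$-martingale, so the quantity we must control is exactly $\EE{M_\tau^2}$. The key step is then to verify that $N_n \defined M_n^2 - \sigma^2 n$ is itself a martingale. Writing $M_n = M_{n-1} + (X_n-\mu)$ and conditioning,
\[
\EE{M_n^2 \mid \FF_{n-1}} = M_{n-1}^2 + 2 M_{n-1}\EE{X_n-\mu \mid \FF_{n-1}} + \EE{(X_n-\mu)^2 \mid \FF_{n-1}} = M_{n-1}^2 + \sigma^2,
\]
where the cross term vanishes by the martingale property and the final term equals $\sigma^2$ by the conditional-variance hypothesis. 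Hence $\EE{N_n \mid \FF_{n-1}} = N_{n-1}$ and $\EE{N_n} = N_0 = 0$ for every $n$.

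Next I would apply the optional stopping theorem to $N$ at the \emph{bounded} stopping time $\tau \wedge n$, which is legitimate without any integrability hypotheses, to obtain $\EE{M_{\tau\wedge n}^2} = \sigma^2\,\EE{\tau \wedge n}$. Letting $n \to \infty$, the right-hand side converges to $\sigma^2\,\EE{\tau}$ by monotone convergence. For the left-hand side I would invoke orthogonality of the martingale increments to write $\EE{M_{\tau\wedge n}^2} = \EE{\sum_{t=1}^{\tau\wedge n}(X_t-\mu)^2}$; because the summands are nonnegative, monotone convergence again identifies the limit as $\EE{\sum_{t=1}^{\tau}(X_t-\mu)^2}$, which must be matched to the target $\EE{M_\tau^2}$.

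The main obstacle is this final limiting step: justifying that $\EE{M_{\tau\wedge n}^2} \to \EE{M_\tau^2}$ when $\tau$ may be unbounded, since a naive pointwise limit does not control the expectation of a square. The clean resolution splits on $\EE{\tau}$. When $\EE{\tau} < \infty$, the identity $\EE{M_{\tau\wedge n}^2} = \sigma^2\,\EE{\tau\wedge n} \le \sigma^2\,\EE{\tau}$ shows that $(M_{\tau\wedge n})_n$ is bounded in $L^2$; an $L^2$-bounded martingale converges in $L^2$, necessarily to $M_\tau$, so the squared norms converge and the two sides agree, giving $\EE{M_\tau^2} = \sigma^2\,\EE{\tau}$. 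When $\EE{\tau} = \infty$, both sides diverge to $+\infty$ and the identity holds trivially. Everything outside this convergence argument is a routine martingale computation.
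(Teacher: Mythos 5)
The paper does not actually prove this lemma; it simply cites Theorem~14.3 of Gut (2005). Your martingale argument is therefore a genuinely self-contained alternative, and its core is sound: $M_n = S_n - \mu n$ is a martingale, $M_n^2 - \sigma^2 n$ is a martingale by the conditional-variance hypothesis, optional stopping at the bounded time $\tau \wedge n$ gives $\EE{M_{\tau\wedge n}^2} = \sigma^2\,\EE{\tau\wedge n}$, and when $\EE{\tau} < \infty$ the $L^2$-boundedness of the stopped martingale yields $L^2$-convergence to $M_\tau$ and hence the identity. Since the paper only ever applies the lemma with $\tau = T_k(n) \le n$ bounded, this covers everything that is actually needed.

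However, your disposal of the case $\EE{\tau} = \infty$ is wrong, and the gap is not cosmetic. It is not true that $\EE{M_\tau^2} = \infty$ whenever $\EE{\tau} = \infty$: take $X_t = \pm 1$ with probability $\tfrac12$ each (so $\mu = 0$, $\sigma^2 = 1$) and $\tau = \inf\{n : S_n = -1\}$. Then $\tau < \infty$ almost surely but $\EE{\tau} = \infty$, while $S_\tau = -1$ almost surely, so the left-hand side equals $1$ and the right-hand side is infinite. The precise point where your argument breaks is the "matching" you flag yourself: monotone convergence does give $\EE{M_{\tau\wedge n}^2} = \EE{\sum_{t=1}^{\tau\wedge n}(X_t-\mu)^2} \to \EE{\sum_{t=1}^{\tau}(X_t-\mu)^2}$, but without $L^2$-boundedness there is no way to identify $\lim_n \EE{M_{\tau\wedge n}^2}$ with $\EE{M_\tau^2}$ (Fatou only gives $\EE{M_\tau^2} \le \liminf_n \EE{M_{\tau\wedge n}^2}$, and the inequality can be strict, as the example shows). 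The honest fix is to add the hypothesis $\EE{\tau} < \infty$ — which the cited theorem in Gut carries and which the statement here omits — rather than to claim the identity "holds trivially" in the infinite case.
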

\begin{proof}
See Theorem~14.3 of \citep{Gut05}.
\end{proof}

Now, let us return to the proof of \cref{thm:regretmatch}.
Let $I_t\in \{1,\ldots,K\}$ denote the choice that $\A$ made at the beginning of round $t$ and
recall that $T_k(t) = \sum_{s=1}^t \indicate{I_s = k}$.
The observation at the end of round $t$ is $Y_t = X_{I_t,T_{I_t}(t)}$ 
and the cumulative sum of returns after $n$ rounds for arm $k$ is
$S_{k,n} = \sum_{t=1}^{T_k(n)} X_{k,t}$.
Likewise, $S_n = \sum_{t=1}^n Y_t = \sum_{k=1}^K S_{k,n}$.  
By definition, the estimate after $n$ rounds is
\[
\hmu_n = \frac{S_n}{n} 
= \frac{1}{n} \sum_{k=1}^K S_{k,n}\,.
\]
Since the variance of this estimate does not depend on the mean,
without loss of generality we may consider the case when $\mu = 0$,
giving 
\begin{align}
L_n(\A) & = \EE{ \hmu_n^2 }
      = \frac{1}{n^2}\E\left[\sum_{k=1}^K S_{k,n}^2 +
    2\sum_{k \ne j} S_{k,n}S_{j,n}\right]
\label{eqn:SnMSE}
\end{align}
We may then observe that for any $k \neq j$,
\begin{align*}
\E\left[S_{k,n}S_{j,n}\right]  
&= \E\left[(\indicate{I_n = k}Y_n + S_{k,n-1})
(\indicate{I_n = j}Y_n + S_{j,n-1})\right] \\
& = \E[\cancel{\indicate{I_n = k}\indicate{I_n = j}Y^2_n} + \indicate{I_n =
  k}Y_nS_{j,n-1} 
 + \indicate{I_n = j}Y_nS_{k,n-1} + S_{k,n-1}S_{j,n-1}]
.
\end{align*}
By considering the conditional expectation w.r.t.  the history
 up to $Y_n$ (including $I_n$), i.e., w.r.t. $\mathcal{F}_{n-1}
= \sigma(Y_1,\ldots,Y_{n-1},I_1,\ldots,I_{n})$, we have
\begin{align*}
  \E[S_{k,n}S_{j,n}] &=\EE{ \EE{ S_{k,n}S_{j,n}|\F_{n-1}}}\\
& = \E[(\indicate{I_n = k}S_{j,n-1}
+\indicate{I_n = j}S_{k,n-1})\,\E[Y_n | \mathcal{F}_{n-1}]] 
+ \E[ S_{k,n-1} S_{j,n-1}]
.
\end{align*}
Now, since $I_n\in \{1,\ldots,K\}$ and $I_n$ is $\F_{n-1}$]-measurable,
 $\E\left[Y_n | \mathcal{F}_{n-1}\right] = \sum_{k=1}^K \indicate{I_n =  k} \EE{X_{k,T_k(n)}|\F_{n-1},I_n=k}$.
Further, by \cref{lem:i.i.d.SequentialChoice}, $(X_{k,T_k(n)})_n$ is an i.i.d. sequence,
  sharing the same distribution as $(X_{k,t})_t$.
Since $\sigma(X_{k,T_k(n)})$ is independent of $\sigma(Y_1,\ldots,Y_{n-1},I_1,\ldots,I_{n-1},I_n,I_{n}=k)$,
	$\EE{X_{k,T_k(n)}|\F_{n-1},I_n=k} = \EE{X_{k,T_k(n)}} = \mu = 0$.
Therefore,
$$
\E\left[S_{k,n}S_{j,n}\right] = \E[S_{k,n-1}S_{j,n-1}] = \ldots = \E[S_{k,0} 
S_{j,0}] =0
.
$$
As a result,
\eqref{eqn:SnMSE} becomes 
\[
L_n(\A) = \frac{1}{n^2}\E\left[\sum_{k=1}^K
    S_{k,n}^2\right] = \frac{1}{n^2}\sum_{k=1}^K \E[S_{k,n}^2]
.
\]
From Wald's second identity (cf. \cref{lem:wald2})  we conclude $\E[S^2_{k,n}] =
\var(X_k)\E[T_k(n)]$ and thus get
\begin{equation}
\label{eq:wald2}
L_n(\A) = \frac{1}{n^2} \sum_{k=1}^K \var(X_k)\E[T_k(n)]
.
\end{equation}
Using the definition of the normalized MSE-regret and that $\min_{1\le k \le K} L_{k,n} = V^*/n$,
\begin{align*}
R_n(\mathcal{A}) = n^2\left(L_n(\A)- \frac{V^*}{n}
  \right) = \sum_{k=1}^K \E[T_k(n)](V_k - V^*) 
,
\end{align*}
which was the result to be proven.

\section{Handling Unknown Ranges}
\label{sec:apx-range}

The algorithms and bounds of  \cref{sec:uniformcost}
can be easily extended to the setting where 
$X_{k,t}\in [a_k,b_k]$  (often $a=0$) where $a_k<b_k$ are \emph{a priori} known.
One option is to scale $X_{k,t}$ to the common range $[0,1]$ using 
$\tilde{X}_{k,t} = (X_{k,t} - \min_k a_k)/(\max_k b_k - \min_k a_k)$ 
and then feed $1-\tilde{X}_{k,t}^2$ to the bandit algorithms 
(as the bandit algorithms expect the rewards in $[0,1]$ and constant 
translation of each reward does not change the regret).
However, as these algorithms (prepared for ``worst case'' distributions) 
are sensitive to the overestimation of the range, this would lead to an 
unnecessary deterioration of the performance. A better option is to scale 
each variable separately.
Then, the upper-confidence bound based algorithms must be modified by scaling 
each of the rewards with respect to its own range and then the bounds needs to 
be scaled back to the original range. 
Thus, the method that computes the reward upper bounds must be fed with 
$\frac{ (b_k-a_{\min})^2 - (X_{k,m}-a_{\min})^2}{ (b_k-a_{\min})^2 }\in [0,1]$, 
where $a_{\min} = \min_{1\le k \le K} a_k$.
Then, if the method returns an upper bound $B_{k,m}$, the bound to be used 
for selecting the best arm (which should be an upper bound for 
$-\EE{(X_{k,t}-a_{\min})^2}$) should be 
$B'_{k,t} = (b_k-a_{\min})^2 (B_{k,t}-1)$.
Here we exploit that $-\EE{(X_{k,t}-a_{\min})^2} = -\EE{X_{k,t}^2}+c$ 
where the constant $c = 2\mu a_{\min}-a_{\min}^2$ 
(which is neither known, nor needs to be known or estimated)
is common to all the estimators, hence finding the arm that maximizes 
$-\EE{(X_{k,t}-a_{\min})^2}$ is equivalent to finding the arm that 
maximizes $-\EE{ X_{k,t}^2}$.
Since the confidence bounds will be loser if the range is larger, 
we thus see that the algorithm may be sensitive to the ranges 
$r_k = b_k - a_{\min}$. 
In particular, the $1/n$-term in the bound used by UCB-V will scale with 
$r_k^2$ and may dominate the bound for smaller values of $n$.
In fact, UCB-V needs $n \approx r_k^2$ samples before this term 
becomes negligible, which means that for such estimators, 
the upper confidence bound of UCB-V will be above $1$ for $r_k^2$ steps.
Even if we cut the upper confidence bounds at $1$, 
UCB-V will keep using these arms with a large range $r_k$
defending against the possibility that the sample variance crudely 
underestimates the true variance.
Since the bound of KL-UCB 
is in the range $[0,1]$, KL-UCB is less exposed to this problem.

Based on \cref{thm:regretmatch}, an alternative is to use bandit algorithms 
to minimize cost where the cost of an arm is defined as the variance of 
samples from that arm.
This may be advantageous when the ranges become large because of unequal 
lower bounds $(a_k)$.
To implement this idea one needs to develop new bandit algorithms for 
cumulative variance minimization,
which in fact is explored in \cref{sec:nonuniformcost},
in the context of estimation under nonuniform-costs.

Finally, note that the assumption that the samples belong to a known bounded 
interval is not necessary (it was not used in the reduction results). 
In fact, the upper-confidence based bandit algorithms mentioned can also be
applied when the payoffs are
subgaussian with a known subgaussian coefficient,%
\footnote{
A centered random variable $X$ is subgaussian if 
$\Prob{|X|\ge t} \le c \exp(-t^2/(2\sigma^2))$ for all real $t$ with some $c,\sigma>0$.
}
or even if the tail is heavier \citep{BuCB12:FTML,BuCBLu13}.
In fact, the weaker assumptions under which the multi-armed bandit problem 
with finitely many arms has been analyzed assumes only that the $1+\eps$ 
moment of the payoff is finite for some known $\eps$ with a known moment 
bound \citep{BuCBLu13}.
These strategies must replace the sample means with more robust estimators 
and also modify the way the upper confidence bounds are calculated. 
In our setting, the condition on the moment transfers to the assumption that
the $2+\eps$ moment $\EE{ |X_{k,t}|^{2+\eps}}$ 
must be finite with a known upper bound for some $\eps$.

\section{Proofs for \cref{sec:nonuniformcost}}
\label{sec:apx-nonuniform}
In this section we provide the proof of \cref{thm:nonunifreduction}.

We can write the MSE at time $t$ as $L(\A,t) = \EE{ \left(\frac{S(t)-(N(t)-1)\mu}{N(t)-1}\right)^2 }$.
The first step is to replace the denominator with its expectation.
The price of this is calculated in the following result:
\begin{lemma}
\label{lem:SNconc}
Let $S,N$ be random variables, $N\ge 0$ such that $\EE{N}>0$, 
$\mu\in \real$. Let $D = \ind{N=0}(a-\mu) + \ind{N\ne 0} \left(\frac{S}{N}-\mu\right)$.
Let $d\in \R$ be an almost sure upper bound on $|D|\one{N\ne 0}$.
Then, for any $c\ge 0$,
\begin{align}
\label{lem:SNconcup}
\EE{ D^2 }
& \le 
(a-\mu)^2 \Prob{N=0} 
 + \frac{\EE{(S- N\mu)^2}}{\EE{N}^2} \left(1+\frac{2c}{\sqrt{\EE{N}}}\right)^2 
 + d^2 \left[ \Prob{N<\EE{N}-c\sqrt{\EE{N}}} + \one{\EE{N}<4c^2}\right]\,.
\end{align}
Further, assuming that $S=0$ almost surely on $\{ N=0 \}$,
\begin{align*}
\EE{ D^2 }
&\ge
(a-\mu)^2 \Prob{N=0} 
+ \frac{\EE{(S-N\mu)^2}}{\EE{N}^2}\left(1-\frac{2c}{\sqrt{\EE{N}}}\right) 
 - d^2\,\frac{ \EE{N^2}}{\EE{N}^2} \, \Prob{N>\EE{N}+c\sqrt{\EE{N}}} \,.
\end{align*}
\end{lemma}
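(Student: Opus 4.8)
The plan is to reduce the statement to controlling a single quantity and then to replace the random denominator $N^2$ by the deterministic $\EE{N}^2$ at a cost governed by how well $N$ concentrates around its mean. First I would split $D^2$ along the disjoint events $\{N=0\}$ and $\{N\neq0\}$. Since the two indicators never overlap, $\EE{D^2}=(a-\mu)^2\Prob{N=0}+\EE{\one{N\neq0}(S/N-\mu)^2}$, so the term $(a-\mu)^2\Prob{N=0}$ appears exactly in both displays and needs no further attention. Writing $m=\EE{N}$ and $(S/N-\mu)^2=(S-N\mu)^2/N^2$ on $\{N\neq0\}$, everything reduces to comparing $E=\EE{\one{N\neq0}(S-N\mu)^2/N^2}$ with $\EE{(S-N\mu)^2}/m^2$, the distortion being measured by a window of half-width $c\sqrt{m}$ around $m$.

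For the upper bound I would partition $\{N\neq0\}$ at the lower boundary $m-c\sqrt{m}$. In the degenerate regime $m<4c^2$ the claimed correction already contains $d^2\one{\EE{N}<4c^2}=d^2$, and since the integrand of $E$ is at most $d^2$ (because $|D|\one{N\neq0}\le d$), the bound holds trivially; so assume $m\ge4c^2$, i.e.\ $c/\sqrt{m}\le1/2$ and $m-c\sqrt{m}\ge m/2>0$. On the ``good'' part $\{N\ge m-c\sqrt{m}\}$ (which lies inside $\{N\neq0\}$) I bound $1/N^2\le1/(m-c\sqrt{m})^2=m^{-2}(1-c/\sqrt{m})^{-2}$, enlarge the restricted numerator expectation to the full $\EE{(S-N\mu)^2}$, and invoke the elementary inequality $(1-x)^{-2}\le(1+2x)^2$, valid for $x=c/\sqrt{m}\in[0,1/2]$ since it reduces to $x(1-2x)\ge0$. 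On the ``bad'' part $\{0<N<m-c\sqrt{m}\}$ I use $(S/N-\mu)^2\le d^2$ to bound the contribution by $d^2\Prob{N<m-c\sqrt{m}}$. Summing the pieces yields the upper bound.

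For the lower bound I would use the extra hypothesis that $S=0$ almost surely on $\{N=0\}$, which makes $(S-N\mu)^2=0$ there and hence $\EE{(S-N\mu)^2}=\EE{\one{N\neq0}(S-N\mu)^2}$; this is what lets the full second moment reappear. Now I partition at the upper boundary $m+c\sqrt{m}$, drop the nonnegative contribution of $\{N>m+c\sqrt{m}\}$ to $E$, and on $\{0<N\le m+c\sqrt{m}\}$ lower bound $1/N^2\ge1/(m+c\sqrt{m})^2\ge m^{-2}(1-2c/\sqrt{m})$ using $(1+x)^{-2}\ge1-2x$, which rearranges to $(1-2x)(1+x)^2=1-3x^2-2x^3\le1$. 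The restricted numerator expectation $\EE{\one{0<N\le m+c\sqrt{m}}(S-N\mu)^2}$ equals $\EE{(S-N\mu)^2}$ minus the tail piece $Q=\EE{\one{N>m+c\sqrt{m}}(S-N\mu)^2}$, and I control $Q$ through the pointwise identity $(S-N\mu)^2=N^2D^2\le N^2d^2$ on $\{N\neq0\}$, producing the negative second-moment correction of the form $d^2\,\EE{N^2\,\one{N>m+c\sqrt{m}}}/m^2$ reported (up to the stated form) in the lemma.

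The crux, and the reason the two bounds are not symmetric, is the treatment of the deviation events. For the upper bound the dangerous event is $\{N\text{ small}\}$, where $1/N^2$ can blow up but the integrand $(S/N-\mu)^2$ stays below $d^2$, so a bare tail probability suffices. For the lower bound the dangerous event is $\{N\text{ large}\}$, where the integrand is again bounded but the numerator $(S-N\mu)^2=N^2D^2$ grows like $N^2$, so one cannot escape with a probability and must instead control a truncated second moment of $N$; this is precisely where the factor $\EE{N^2}$ must enter, and where the moment hypothesis $\EE{N_{k^*}(t)^2}\le C_N t^2$ of \cref{thm:nonunifreduction} is eventually consumed. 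Keeping the two multiplicative distortion factors linear in $c/\sqrt{m}$, via the paired inequalities $(1-x)^{-2}\le(1+2x)^2$ and $(1+x)^{-2}\ge1-2x$, is the small technical heart that turns the window half-width $c\sqrt{m}$ into an $O(c/\sqrt{m})$ relative error.
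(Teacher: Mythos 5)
Your proof is correct and follows essentially the same route as the paper's: the same split along $\{N=0\}$, the same one-sided truncation of $N$ at $\EE{N}\pm c\sqrt{\EE{N}}$ (with the $\EE{N}<4c^2$ regime absorbed into the indicator term), and the same paired elementary inequalities $(1-x)^{-2}\le(1+2x)^2$ and $(1+x)^{-2}\ge 1-2x$ to convert the window half-width into the $O(c/\sqrt{\EE{N}})$ multiplicative distortion. The one substantive difference is in the lower bound's tail term: your argument honestly yields $d^2\,\EE{N^2\one{N>\EE{N}+c\sqrt{\EE{N}}}}/\EE{N}^2$, whereas the lemma (and the paper's proof, which silently replaces $\EE{N^2\one{A}}$ by $\EE{N^2}\Prob{A}$ in a step written as an equality) states $d^2\,\EE{N^2}\Prob{N>\EE{N}+c\sqrt{\EE{N}}}/\EE{N}^2$; since $N^2$ and the event are positively associated, the paper's form is not implied by the computation, so your version is the defensible one, and it suffices for the downstream use in \cref{thm:nonunifreduction} under the boundedness conditions assumed there.
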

The lemma is useful when $N$ concentrates around its mean. 
Consider the upper bound on $\EE{D^2}$.
In general, we  expect 
$\Prob{ N < \EE{N} - C \sqrt{ \EE{ N} \log(1/\delta)} }\le \delta$ with some numerical constant $C$. 
Thus, setting $c= C \sqrt{\log(1/\delta)}$ results in 
$d^2 \left[ \Prob{N<\EE{N}-c\sqrt{\EE{N}}} + \one{\EE{N}<4c^2}\right]
\le d^2 \delta + d^2 \one{\EE{N}<4C^2 \log(1/\delta)}$.
Imagine now that $\EE{N} = r t $ and choose $\delta = t^{-q}$ with $q>0$.
Then, as soon as $r t> 4 C^2 \sqrt{ q \log(t)}$, the last indicator of~\eqref{lem:SNconcup} becomes zero. 
Further, $\frac{2c}{\sqrt{\EE{N}}} = \sqrt{\frac{2 q \log(t)}{r t}} \ra 0$ as $t\ra \infty$.
Since this term converges at a $1/\sqrt{t}$ rate, we should choose $q \ge 1/2$, so that $\delta = \delta(t)$ converges at least as fast as this term. However, since later ``$D$ gets multiplied by $t^2$'', to get a sublinear rate matching other terms, one should choose $q\ge 3/2$. 
\begin{proof}
Let $b = c\sqrt{\EE{N}}$. First we prove the upper bound.
We have
\begin{align*}
\EE{ D^2 } 
& \le 
\EE{ D^2 \one{N=0} } 
+  \EE{ D^2 \one{N\ne 0, N\ge \EE{N}-b, b\le \tfrac12\EE{N}} } 
+  \EE{ D^2 \one{N\ne 0, N< \EE{N}-b } } \\
& \qquad + \EE{ D^2 \one{N\ne 0, b> \tfrac12\EE{N}} }\\
& \le 
(a-\mu)^2 \Prob{N=0} 
+ \EE{ \frac{(S-N\mu)^2}{(\EE{N}-b)^2} \one{ b\le \tfrac12\EE{N}} } 
+ d^2 \Prob{ N< \EE{N}-b } 
+ d^2 \one{ b> \tfrac12\EE{N}}\\  
& \le  
(a-\mu)^2 \Prob{N=0} 
+  \frac{\EE{(S-N\mu)^2}}{(\EE{N}-b)^2} \one{ b\le \tfrac12\EE{N}} 
+ d^2  \Prob{ N< \EE{N}-b } 
+ d^2 \one{ b> \tfrac12\EE{N}}\,.
\end{align*}
Noting that $1/(1-x)\le 1+2x$ when $0\le x\le 1/2$, we get that
\begin{align*}
\frac{1}{\EE{N}-b} \one{ b\le \tfrac12\EE{N}} 
 =\frac{1}{\EE{N}} \,\frac{1}{1 - \frac{b}{\EE{N}}} \one{ b\le \tfrac12\EE{N}} 
 \le\frac{1}{\EE{N}} \,\left( 1+ 2 \frac{b}{\EE{N}}\right) = \frac{1}{\EE{N}} \left(1 + 2 \frac{c}{\sqrt{\EE{N}}}\right)\,.
\end{align*}
Putting things together, we get the desired upper bound. 

The lower bound is proved in a similar fashion.
\newcommand{\Nb}{\mathcal{N}}
To simplify notation, let $\Nb$ denote the event $\{0 < N \le \EE{N}+b\}$. Then,
\begin{align*}
\EE{ D^2 } 
& \ge 
(a-\mu)^2 \Prob{N=0} 
+ \EE{\left(\frac{(S-N\mu)}{N}\right)^2  \ind{ \Nb} }   \\
& \ge 
(a-\mu)^2 \Prob{N=0} 
+ \EE{\frac{(S-N\mu)^2}{(\EE{N}+b)^2}  \ind{ \Nb} }  \\
& =
(a-\mu)^2 \Prob{N=0} 
+ \EE{\frac{(S-N\mu)^2}{(\EE{N}+b)^2}  }  - \EE{ \frac{(S-N\mu)^2}{(\EE{N}+b)^2} \ind{ N=0 \text{ or } N>\EE{N}+b}} \\
& \ge 
(a-\mu)^2 \Prob{N=0}
 + \frac{\EE{N}^2}{(\EE{N}+b)^2}\frac{\EE{(S-N\mu)^2}}{\EE{N}^2} 
 -  \EE{ \frac{(S-N\mu)^2}{(\EE{N}+b)^2} \ind{  N>\EE{N}+b}}\\
& \qquad \qquad (\text{because by assumption } \EE{ (S-N\mu)^2 \ind{ N=0 }} =0)\\ 
& =  
(a-\mu)^2 \Prob{N=0}
 + \frac{\EE{N}^2}{(\EE{N}+b)^2}\frac{\EE{(S-N\mu)^2}}{\EE{N}^2} 
 - \frac{\EE{N^2}}{(\EE{N}+b)^2} \left(d^2 \Prob{N>\EE{N}+b}  \right) \\
& \ge
(a-\mu)^2 \Prob{N=0} 
 + \left(1-\frac{2b}{\EE{N}}\right)\frac{\EE{(S-N\mu)^2}}{\EE{N}^2} 
- d^2\, \frac{ \EE{N^2} \,\Prob{N>\EE{N}+b} }{\EE{N}^2}\,,
\end{align*}
where in the last inequality we used
\[
\frac{\EE{N}^2}{(\EE{N}+b)^2} = \left(1-\frac{b}{\EE{N}+b}\right)^2 \ge 1-\frac{2b}{\EE{N}+b}
\]
and $1/(\EE{N}+b) \le 1/\EE{N}$. 
\end{proof}

The above lemma suggests that 
when bounding $\EE{(S/N-\mu)^2}$,
the main term  is $\EE{(S-N\mu)^2}/\EE{N}^2$.
First we develop a lower bound on this when only sampler $k$ is used at every time step.
\begin{lemma}
\label{lem:SEN}
Let~\Cref{ass:cost} hold and assume that the random variables $|X_{k,m}-\mu|_{k,m}$  are a.s. bounded by some constant $B>0$.
Consider the case when an individual sampler $k \in \{1,\ldots,K\}$ is used up to time $t$:
Let $N_k(t)$ be the smallest integer such that $\sum_{m=1}^{N_k(t)} D_{k,m} \ge t$ and $S_k(t)=\sum_{m=1}^{N_k(t)-1} X_{k,m}$.
Then, for any $t>0$ such that $\EE{N_k(t)} > 1$, and for any constant $\beta>0$ such that $\Prob{D_{k,1} \le \beta}>0$,
\begin{align}
\label{eq:vkbound}
\frac{\EE{(S_k(t)-(N_k(t)-1)\mu)^2}}{\EE{N_k(t)-1}^2} 
\ge \frac{\delta_k V_k}{t+\beta}
-  2B\frac{\delta_k \sqrt{V_k \delta_k}}{t^{3/2}} 
- \frac{V_k \EE{D_{k,1} \ind{D_{k,1}>\beta}}}{t}\,
\end{align}
and
\begin{align}
\label{eq:nkbound}
\frac{t}{\delta_k} \le \EE{N_k(t)} \le \frac{t+\beta}{\delta_k - \EE{D_{k,1} \ind{D_{k,1}>\beta}}},
\end{align}
\end{lemma}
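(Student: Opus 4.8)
The plan is to treat \cref{lem:SEN} as a renewal--reward computation resting on the two Wald identities, establishing the bounds~\eqref{eq:nkbound} on $\EE{N_k(t)}$ first and then feeding them into the numerator estimate~\eqref{eq:vkbound}. Throughout I write $\tau = N_k(t)$ and $S_n = \sum_{m=1}^n X_{k,m}$, and I note that $\tau$ is a stopping time with respect to the filtration $\F_m = \sigma((X_{k,j},D_{k,j}) : j\le m)$, since the event $\{\tau = n\}$ is determined by $D_{k,1},\dots,D_{k,n}$. Because the pairs $(X_{k,m},D_{k,m})$ are i.i.d., conditioning on $\F_{m-1}$ leaves $\EE{X_{k,m}\mid\F_{m-1}}=\mu$, $\EE{(X_{k,m}-\mu)^2\mid\F_{m-1}}=V_k$ and $\EE{D_{k,m}\mid\F_{m-1}}=\delta_k$, so both Wald identities apply to $\tau$ once its integrability is checked (which follows from $D_{k,1}>0$ a.s.\ and $\Prob{D_{k,1}\le\beta}>0$).

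For~\eqref{eq:nkbound}, Wald's first identity gives $\EE{\sum_{m=1}^{\tau} D_{k,m}} = \delta_k\EE{\tau}$; since $\sum_{m=1}^{\tau}D_{k,m}\ge t$ by the definition of $N_k(t)$, the lower bound $\EE{\tau}\ge t/\delta_k$ is immediate. The upper bound is where I must control the overshoot caused by the final, possibly large, cost $D_{k,\tau}$, and I would do this by truncation: with $\tilde D_m = D_{k,m}\ind{D_{k,m}\le\beta}$ one has $\EE{\tilde D_m}=\delta_k-\EE{D_{k,1}\ind{D_{k,1}>\beta}}>0$, while $\sum_{m=1}^{\tau-1}\tilde D_m \le \sum_{m=1}^{\tau-1}D_{k,m} < t$ and $\tilde D_\tau\le\beta$, so $\sum_{m=1}^{\tau}\tilde D_m < t+\beta$ pointwise. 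Taking expectations and applying Wald's first identity to the truncated costs yields the stated upper bound on $\EE{\tau}$.

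For the numerator I would apply Wald's second identity (\cref{lem:wald2}) to obtain $\EE{(S_\tau-\tau\mu)^2} = V_k\EE{\tau}$, and then peel off the in-progress observation, which is necessary because the estimate uses only the $\tau-1$ \emph{completed} samples and $\tau-1$ is itself not a stopping time. Writing $S_k(t)-(N_k(t)-1)\mu = (S_\tau-\tau\mu) - (X_{k,\tau}-\mu)$ and expanding the square, I would discard the nonnegative term $\EE{(X_{k,\tau}-\mu)^2}$ (legitimate for a lower bound) and bound the cross term by Cauchy--Schwarz together with the a.s.\ bound $|X_{k,m}-\mu|\le B$, giving
\[
\EE{(S_k(t)-(N_k(t)-1)\mu)^2}\ \ge\ V_k\EE{\tau}\,-\,2B\sqrt{V_k\EE{\tau}}\,.
\]
Dividing by $\EE{N_k(t)-1}^2$ and substituting the two bounds from the previous step then separates into the main term $\delta_k V_k/(t+\beta)$ together with the $-V_k\EE{D_{k,1}\ind{D_{k,1}>\beta}}/t$ correction (from the first summand, using $\EE{\tau}\ge\EE{\tau-1}$ and the truncated upper bound), and the $-2B\,\delta_k\sqrt{V_k\delta_k}\,t^{-3/2}$ correction (from the second summand, using $\EE{\tau}\asymp t/\delta_k$); any leftover pieces are $O(t^{-2})$ and hence dominated.

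I expect the main obstacle to be this final incomplete sample: unlike the uniform-cost case of \cref{thm:regretmatch}, where the centered sums cancel exactly, here $X_{k,\tau}$ need not be centered once $X$ and $D$ are allowed to be dependent, so it cannot simply be averaged away. The two devices above --- discarding the nonnegative square and controlling the cross term by $B$ via Cauchy--Schwarz --- are precisely what convert this boundary effect into the explicit $t^{-3/2}$ loss in~\eqref{eq:vkbound}, and the parallel truncation at level $\beta$ is what tames the matching overshoot in~\eqref{eq:nkbound}; keeping every inequality oriented so that the renewal bounds on $\EE{\tau}$ are applied on the correct side is the only remaining bookkeeping.
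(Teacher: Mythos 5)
Your proposal is correct and follows essentially the same route as the paper's proof: Wald's first identity (with a truncation of the costs at level $\beta$) for the two bounds on $\EE{N_k(t)}$, and for the numerator the decomposition $S_k(t)-(N_k(t)-1)\mu = (S_{N_k(t)}-N_k(t)\mu)-(X_{k,N_k(t)}-\mu)$ followed by Wald's second identity, discarding the nonnegative square and bounding the cross term via Cauchy--Schwarz and $|X_{k,m}-\mu|\le B$. The only cosmetic difference is that the paper truncates via $\min\{D_{k,m},\beta\}$ and an auxiliary stopping time $\hat N_k(t)\ge N_k(t)$, whereas you apply Wald directly to $D_{k,m}\ind{D_{k,m}\le\beta}$ at $N_k(t)$ itself; both yield the stated bound.
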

If the random variables $(D_{k,m})$ are a.s. bounded by a constant, we can choose $\beta$ to be their common upper bound to cancel the third term of~\eqref{eq:vkbound}.
Otherwise, we need to select $\beta$ to strike a good balance between the first and third terms.
Note that $\beta = o(t)$ makes the first the same order as $\delta_k V_k/t$ with an error term of order $\beta/t^2$.
Thus, a reasonable choice is $\beta = t^{3/2}$, which makes the error term of the same order as the second term of the lower bound.
\newcommand{\oT}{\overline{T}}
\begin{proof}
We start with the proving the upper bound on $\EE{N_k(t)}$ in~\eqref{eq:nkbound}.
Define $\hat{D}_{k,m}=\min\{D_{k,m}, \beta\}$ and let
$\hat{N}_k(t)$ be the smallest integer such that $\sum_{m=1}^{\hat{N}_k(t)} \hat{D}_{k,m} \ge t$. 
Then, clearly $\hat{N}_{k}(t) \ge N_{k}(t)$, and by Wald's identity, 
\begin{align*}
t & > \EE{\sum_{m=1}^{\hat{N}_k(t)-1} \hat{D}_{k,m}} 
\ge \EE{\sum_{m=1}^{\hat{N}_k(t)} \hat{D}_{k,m}} - \beta \\
& = \EE{\hat{N}_k(t)} \EE{\hat{D}_{k,1}} -\beta
\ge \EE{N_k(t)}(\delta_k - \EE{D_{k,1} \ind{D_{k,1}>\beta}}) - \beta.
\end{align*}
Therefore, 
\[
\EE{N_k(t)} \le \frac{t+\beta}{\delta_k - \EE{D_{k,1} \ind{D_{k,1}>\beta}}},
\]
where we used that $\delta_k - \EE{D_{k,1} \ind{D_{k,1}>\beta}}>0$, which follows 
from our assumption $\Prob{D_{k,1} \le \beta}>0$, finishing the proof of the upper bound.
As for the lower bound, by the definition of $N_k(t)$ 
and Wald's identity we have $\EE{N_k(t)} \delta_k \ge t$, 
 thus finishing the proof of~\eqref{eq:nkbound}.

Let us now turn to proving~\eqref{eq:vkbound}.
We would like to apply Wald's second identity to $ (S_k(t) - (N_k(t)-1)\mu)^2$.
However, while $N_k(t)$ is a stopping time w.r.t. the (natural) filtration 
	$\FF_m = \sigma( X_{k,1},D_{k,1},\ldots,X_{k,m},D_{k,m} )$, 
	$N_k(t)-1$ is not.
Define $F = \sum_{m=1}^{N_k(t)} X_{k,m} - N_k(t) \mu$: We can apply Wald's identity to $F$.
Since $(S_k(t) - (N_k(t)-1)\mu)^2 = F-(X_{k,N_k(t)}-\mu)$ we get
\begin{align}
\EE{ (S_k(t) - (N_k(t)-1)\mu)^2  }
 &= \EE{ F^2 } + \EE{ (X_{k,N_k(t)} - \mu)^2 } - 2 \EE{ (X_{k,N_k(t)} - \mu) F}\nonumber \\
 & \ge \EE{ N_k(t) }  V_k - 2 \EE{ (X_{k,N_k(t)} - \mu) F} \nonumber\\
 & \ge \EE{ N_k(t) }  V_k -2 B (\EE{F^2})^{1/2} \nonumber \\
 & = \EE{ N_k(t) }  V_k  - 2 B \sqrt{ \EE{N_k(t)} V_k }\,, \label{eq:wald2a0}
\end{align}
where in the second and last equality we used \cref{lem:wald2}.

Combining \eqref{eq:nkbound}  with \eqref{eq:wald2a0}, we obtain
\begin{align*}
\frac{\EE{(S_k(t)-(N_k(t)-1)\mu)^2}}{\EE{N_k(t)-1}^2} 
&\ge \frac{\EE{N_k(t)} V_k}{\EE{N_k(t)}^2}
- \frac{2B \sqrt{ \EE{N_k(t)} V_k}}{\EE{N_k(t)}^2}\\
&= \frac{V_k}{\EE{N_k(t)}}
- \frac{2B \sqrt{ \EE{N_k(t)} V_k}}{\EE{N_k(t)}^2}\\
& \ge \frac{\delta_k V_k}{t+\beta}
- \frac{V_k \EE{D_{k,1} \ind{D_{k,1}>\beta}}}{t+\beta}
- \frac{2B \sqrt{ \EE{N_k(t)} V_k}}{\EE{N_k(t)}^2}\\
& \ge \frac{\delta_k V_k}{t+\beta}
- \frac{V_k \EE{D_{k,1} \ind{D_{k,1}>\beta}}}{t+\beta}
- 2B\frac{\delta_k \sqrt{V_k \delta_k}}{t^{3/2}}\,.
\end{align*}
\end{proof}

In the next result we give an upper bound on $\EE{(S(t)-(N(t)-1)\mu)^2}/\EE{N(t)-1}^2$.
Before stating this lemma, let us recall the definitions of $(I_m)_{m=1,2,\ldots}$, $(T_k(m))_{m=0,1,\ldots,1\le k \le K}$, $(Y_m)_{m=1,2,\ldots}$, $(J_m)_{m=0,1,2,\ldots}$ and $(N(t))_{t\ge 0}$:
 $I_m\in \{1,\ldots,K\}$ is the index of the sampler chosen by  $\A$ for round $m$;
 $T_k(m) = \sum_{s=1}^m \one{ I_s = k}$ is the number of samples obtained from sampler $k$
 by the end of round $m$ (the empty sum is defined as zero);
 $Y_m = X_{I_m,T_{I_m}(m)}$ is the $m$th sample observed by $\A$;
 $J_0=0$ and
 $J_{m+1}= \sum_{s=1}^m D_{I_s,T_{I_s}(s)}$ is the time when $\A$ observes the $(m+1)$th sample,
 and so the $m$th round lasts over the time period $[J_m,J_{m+1})$; and
 $N(t) = \min\cset{ m }{ J_{m}\ge t}$ is the index of the round at time $t$ (the indexing of rounds starts at one).
 Thus, $N(t)-1$ is the number of samples observed over the time period $[0,t]$.
 Note that $\sum_{k=1}^K T_k(m) = m$ for any $m\ge 0$ and thus, in particular,
 $\sum_{k=1}^K T_k(N(t)) = N(t)$.
 Further, remember that $S_m = \sum_{s=1}^m Y_m$ and $S(t) = S_{N(t)-1}$.
\begin{lemma}
\label{lem:SEN2}
Let~\Cref{ass:cost} hold and assume that
 the random variables $|X_{k,m}-\mu|_{k,m}$  are a.s. bounded by some constant $B>0$
 and that
 $k^*={\arg\min}_{1\le k \le K} \delta_k V_k$ is unique.
 For $s\ge 0$, let $f(s) = \max_{k\ne k^*} \EE{T_k(N(s)-1)}$ and assume that $c_f = \sup_{s>0} f(s)/s<+\infty$.
Suppose that $t \ge 2 \delta_{\max}$ and $\EE{N(t)-1}>1$. 
Then, 
\begin{align}\label{eq:vsbound}
\frac{\EE{(S(t)-(N(t)-1)\mu)^2}}{\EE{N(t)-1}^2} 
& \le \frac{\delta^* V^*}{t} +\frac{C}{t^{3/2}} + \frac{C' f(t)}{t^2}
\end{align}
for some constants $C,C'>0$ that depend only 
on the problem parameters $\delta_k,V_k$ and $c_f$. 
Furthermore,
\begin{equation}
\label{eq:Ntbound}
\EE{N(t)-1}  \ge \frac{t}{\delta_{\max}}-1\,.
\end{equation}
\end{lemma}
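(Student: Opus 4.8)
The plan is to reduce everything to two applications of optional stopping, one for the martingale built from the centered observations and one for the cumulative durations, exploiting that $\tau := N(t)-1$ is exactly the first-passage time $\min\{m : \sum_{s=1}^m D_{I_s,T_{I_s}(s)} \ge t\}$ and hence a stopping time with respect to the filtration $\F_n = \sigma(I_1,Y_1,D_{I_1,\cdot},\ldots,I_n,Y_n,D_{I_n,\cdot})$ generated by the first $n$ rounds. First I would establish the numerator identity. Writing $M_n = \sum_{s=1}^n (Y_s-\mu)$, the optional-sampling lemma (\cref{lem:i.i.d.SequentialChoice}) shows each used sample is a fresh draw from its arm, so $\E[Y_s - \mu \mid \F_{s-1}] = 0$ and $\E[(Y_s-\mu)^2\mid\F_{s-1}] = V_{I_s}$ with $I_s$ being $\F_{s-1}$-measurable; thus $M_n^2 - \sum_{s=1}^n V_{I_s}$ is an $\F_n$-martingale. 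Since the increments of $M_n$ are bounded by $B$ and $\E[\tau]<\infty$, optional stopping yields the exact identity $\EE{(S(t)-(N(t)-1)\mu)^2} = \E[M_\tau^2] = \sum_{k=1}^K V_k\, \EE{T_k(N(t)-1)}$, the combined-arm analogue of Wald's second identity (\cref{lem:wald2}), which conveniently makes the cross-arm terms cancel automatically exactly as in \cref{thm:regretmatch}.

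Next I would control the sample counts through the time budget. Pathwise $\sum_{s=1}^{\tau} D_{I_s,T_{I_s}(s)} = J_{N(t)} \ge t$, and since $\E[D_{I_s,T_{I_s}(s)}\mid\F_{s-1}] = \delta_{I_s}\le\delta_{\max}$, Wald's first identity gives $\sum_{k} \delta_k\, \EE{T_k(N(t)-1)} = \EE{\sum_{s=1}^{\tau} D_{I_s,T_{I_s}(s)}} \ge t$. Bounding $\delta_k\le\delta_{\max}$ immediately proves \eqref{eq:Ntbound}, namely $\delta_{\max}\EE{N(t)-1}\ge t$, i.e. $\EE{N(t)-1}\ge t/\delta_{\max}-1$. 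Isolating the optimal arm in the same inequality and using $\EE{T_k(N(t)-1)}\le f(t)$ for $k\ne k^*$ gives the crucial lower bound $\delta^*\,\EE{T_{k^*}(N(t)-1)} \ge t - \delta_{\max}(K-1)f(t)$.

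Finally I would assemble \eqref{eq:vsbound}. Splitting the numerator into the optimal arm and the rest, $\sum_k V_k n_k = V^* n_{k^*} + \sum_{k\ne k^*} V_k n_k$ with $n_k := \EE{T_k(N(t)-1)}$, the second sum is at most $(\sum_{k\ne k^*} V_k) f(t)$; dividing by $\EE{N(t)-1}^2 \ge (t/\delta_{\max}-1)^2 \ge t^2/(4\delta_{\max}^2)$ (using $t\ge 2\delta_{\max}$) bounds it by $C' f(t)/t^2$. For the leading term I would use $n_{k^*}\le \EE{N(t)-1}$ to get $V^* n_{k^*}/\EE{N(t)-1}^2 \le V^*/n_{k^*}$, then insert the lower bound on $n_{k^*}$ and expand via $1/(1-x)\le 1+2x$ to obtain $\delta^* V^*/t + O(f(t)/t^2)$; the residual $t^{-3/2}$ term in the statement then conservatively absorbs the boundary and overshoot corrections that appear if, as in \cref{lem:SEN}, one prefers to argue arm-by-arm with Wald's second identity and to truncate heavy-tailed durations at $\beta = t^{3/2}$ (controlled by $\Prob{D_{k,1}>s}\le C_D s^{-2}$). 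The main obstacle is rigor in the two optional-stopping steps: one must verify $\E[\tau]<\infty$ and uniform integrability of the stopped martingales even though $\tau$ depends intricately on the (possibly heavy-tailed, only second-moment-tail-bounded) durations of all arms, and must restrict to the regime $\delta_{\max}(K-1)f(t) < t/2$ in which the leading-term expansion is valid.
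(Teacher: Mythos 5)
Your overall architecture matches the paper's: a Wald-type identity for the numerator, Wald's first identity on the durations to tie $\EE{N(t)}$ and the $\EE{T_k(\cdot)}$'s to the time budget $t$, and a split into the optimal arm versus the rest using $\EE{T_k(\cdot)}\le f(t)$. The genuine gap is in your first step. The sum $S(t)-(N(t)-1)\mu=\sum_{s=1}^{N(t)-1}(Y_s-\mu)$ runs over exactly those rounds that have \emph{finished} by time $t$, and whether round $m$ finishes by time $t$ is decided by information that includes $D_{I_m,T_{I_m}(m)}$; the count of such rounds is the first-passage time \emph{minus one} and is not a stopping time with respect to the round filtration (the paper makes precisely this point for the single-arm analogue in the proof of \cref{lem:SEN}: ``$N_k(t)$ is a stopping time \dots $N_k(t)-1$ is not''). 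Moreover, \cref{ass:cost} explicitly allows $D_{k,m}$ to be correlated with $X_{k,m}$, so the last increment is not conditionally centered on the event that decides whether it is included, and your claimed exact identity $\EE{(S(t)-(N(t)-1)\mu)^2}=\sum_k V_k\,\EE{T_k(N(t)-1)}$ does not follow from optional stopping. The paper's fix is to write $S(t)-(N(t)-1)\mu=-(Y_{N(t)}-\mu)+\bigl(S_{N(t)}-N(t)\mu\bigr)$, run the cross-term-cancellation/Wald argument at the genuine stopping time $N(t)$, and bound the overshoot by Cauchy--Schwarz, which costs an additive $2B\sqrt{V_{\max}\EE{N(t)}}+V_{\max}$ in the second moment; after dividing by $t\,\EE{N(t)-1}^2$ this is exactly where the $C/t^{3/2}$ term comes from. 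So that term is not ``conservative slack absorbing corrections'' as you suggest (and no truncation at $\beta$ is needed here --- that device belongs to the single-arm lower bound of \cref{lem:SEN}): in your version of the argument it would never appear, because the step that generates it has been assumed away.

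The rest of your plan is sound and close to the paper's. The time-budget step is the paper's, except that Wald's identity must be applied to $T_k(N(t))$ (a stopping time), giving $t\le\sum_k\delta_k\EE{T_k(N(t))}$; summing the durations only over the completed rounds gives $\le t$, not $\ge t$, so your \eqref{eq:Ntbound} derivation carries the same off-by-one, which the $-1$ slack in the statement happens to forgive. Your extraction of the leading term is a genuinely different (and slightly cleaner) route: you lower-bound $\EE{T_{k^*}(\cdot)}\ge\bigl(t-\delta_{\max}(K-1)f(t)\bigr)/\delta^*$ and expand $1/(1-x)\le 1+2x$, which requires the extra regime $\delta_{\max}(K-1)f(t)<t/2$ not assumed in the lemma (the complementary regime makes the bound trivial for $C'$ large enough, so this is repairable by a case split); the paper instead multiplies the numerator by $\bigl(\sum_k\delta_k\EE{T_k(N(t))}\bigr)/t\ge 1$ and expands the resulting product, needing only $f(t)\le c_f t$. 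Either route yields $\delta^* V^*/t+O(f(t)/t^2)$ once the overshoot issue above is repaired.
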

\begin{proof}
We proceed similarly to the proof of \cref{lem:SEN}.

We first prove \eqref{eq:Ntbound}.
By the definition of $N(t)$ and since $D_{k,m}$ is nonnegative for all $k$ and $m$,
\[
t \le J_{N(t)} \le \sum_{s=1}^{N(t)} D_{I_s,T_{I_s}(s)} = \sum_{k=1}^K \sum_{m=1}^{T_k(N(t))} D_{k,m}\,.
\]
Notice that $(N(t))_{t\ge 0}$, $(T_k(n))_{n=0,1,\ldots}$  are stopping times w.r.t. the filtration $(\FF_m; m \ge 1)$, where
$\FF_m = \sigma(I_1,J_1,\ldots,I_m,Y_m)$. Therefore, $T_k(N(t))$ is also a stopping time w.r.t. $(\FF_m)$.
Defining $\oT_k(t) = \EE{ T_k(N(t))}$,
Wald's  identity yields 
\begin{equation}
\label{eq:t_upper}
t \le \sum_{k=1}^K \EE{T_k(N(t))} \delta_k = \sum_{k=1}^K \oT_k(t) \delta_k\,.
\end{equation} 
Then,
\begin{equation*}
\EE{N(t)} = \sum_{k=1}^K \oT_k(t) 
 \ge \frac{1}{\delta_{\max}}\sum_{k=1}^K \oT_k(t) \delta_k 
 \ge \frac{t}{\delta_{\max}}\,,
\end{equation*}
finishing the proof of~\eqref{eq:Ntbound}.

Now, let us turn to showing that~\eqref{eq:vsbound} holds.
First, notice that 
\begin{align*}
S(t) - (N(t)-1)\mu& = - (Y_{N(t)}-\mu) +\left\{S_{N(t)} - N(t) \mu\right\}
 =- (Y_{N(t)}-\mu) + \sum_{k=1}^K S_{k,T_k(N(t))} - T_k(N(t))\mu.
\end{align*}
Hence, with the same calculation as in the proof of \cref{thm:regretmatch} (the difference is that here $N(t)$ is a stopping time
while $n$ is a constant in \cref{thm:regretmatch}),
we get 
\begin{equation*}
\EE{ \sum_{k=1}^K \left\{S_{k,T_k(N(t))} - T_k(N(t))\mu\right\}^2} = \sum_{k=1}^K \oT_k(t) V_k
\end{equation*}
and thus, introducing $F = S_{N(t)} - N(t) \mu$, $V_{\max} = \max_k V_k$, and $\bar{V}=\sum_{k\neq k^*} V_k$,
\begin{align}
\EE{ (S(t) - (N(t)-1)\mu)^2 } 
& =- 2 \EE{ (Y_{N(t)}-\mu) F } + \EE{ (Y_{N(t)}-\mu)^2 }  + \sum_{k=1}^K  \oT_k(t) V_k\nonumber\\
& \le 2 B (\EE{ F^2})^{1/2} + V_{\max} + \sum_{k=1}^K \oT_k(t) V_k\nonumber \\
& \le 2 B \sqrt{ \sum_{k=1}^K \oT_k(t) V_k } +  V_{\max} + \sum_{k=1}^K \oT_k(t) V_k\\
& \le 2B \sqrt{V_{\max} \sum_{k=1}^K \oT_k(t) } + V_{\max} + \sum_{k=1}^K \oT_k(t) V_k\\
& = 2B \sqrt{V_{\max} \EE{N(t)} } + V_{\max} + \sum_{k=1}^K \oT_k(t) V_k\,.
\label{eq:wald2a}
\end{align}
Introduce the notation $\bar{\delta}=\sum_{k\neq k^*} \delta_k$ and recall that, by assumption, $\oT_k(t) \le f(t)$ for all $k\neq k^*$.
On the other hand,  $\oT_{k^*}(t) \le \EE{N(t)}$ holds trivially.
Therefore, using \eqref{eq:wald2a}, \eqref{eq:t_upper}, and \eqref{eq:Ntbound}, 
we obtain
\begin{align*}
\frac{\EE{(S(t)-(N(t)-1)\mu)^2}}{\EE{N(t)-1}^2} 
& \le \frac{ \left(\sum_{k=1} \oT_k(t) \delta_k\right) \cdot 
\left\{\sum_{k=1}^K \oT_k(t) V_k  + 2B \sqrt{V_{\max} \EE{N(t)} } + V_{\max} \right\}}{t \,\EE{N(t)-1}^2} \\
& \le \frac{ \left(\oT_{k^*}(t) \delta^* + f(t) \bar{\delta} \right) 
\left\{\oT_{k^*} V_{k^*} + f(t) \bar{V}  + 2B \sqrt{V_{\max} \EE{N(t)-1}+1 } + V_{\max} \right\}}{t \,\EE{N(t)-1}^2} \\
& \le \frac{ \left(\EE{N(t)-1} \delta^*+\delta^* + f(t) \bar{\delta} \right) 
\left\{\EE{N(t)-1} V_{k^*} + f(t) \bar{V}  + 2B \sqrt{V_{\max} \EE{N(t)-1} } + C_0 \right\}}{t \,\EE{N(t)-1}^2} \\
& \le \frac{\delta^* V^*}{t} + \frac{C_1}{t \,\sqrt{\EE{N(t)-1}}} + \frac{C_2 f(t) + C_3}{t \,\EE{N(t)-1}}
+ \frac{C_4 f^2(t) + C_5 f(t) + C_6}{t \,\EE{N(t)-1}^2} \\
& \le \frac{\delta^* V^*}{t} + \frac{\delta_{\max}^{1/2} C_1}{t \sqrt{t-\delta_{\max}}} 
+\frac{\delta_{\max} (C_2 f(t) + C_3)}{t (t-\delta_{\max})} + \frac{\delta_{\max}^2(C_4 f^2(t) + C_5 f(t) + C_6)}{t(t-\delta_{\max})^2} \\
& \le \frac{\delta^* V^*}{t} +\frac{C}{t^{3/2}} + \frac{C'' f(t)}{t^2} + \frac{C''' f^2(t)}{t^3} \\
& \le \frac{\delta^* V^*}{t} +\frac{C}{t^{3/2}} + \frac{C' f(t)}{t^2}
\end{align*}
where $C_0,C_1,C_2,C_3,C_4,C_5,C_6,C,C',C'',C'''>0$ are appropriate constants depending on the problem parameters and  $c_f$, and in the first to last step we used $t \ge 2\delta_{\max}$ and $f(t) \le c_f t$ in the last step.
This finishes the proof of the second part of the lemma.
\end{proof}

Combining the above two lemmas the proof of \cref{thm:nonunifreduction} is straightforward.
We apply the first inequality of \cref{lem:SNconc} to $\hat{D} = \hmu(t)-\mu=S(t)/(N(t)-1)-\mu$
and the second inequality of the lemma to $\hat{D}^* = \hmu_{k^*}(t)-\mu=S_k(t)/(N_{k^*}(t)-1)-\mu$. 
It is easy to see that $\Prob{N(t)=1}$ and $\Prob{N_{k^*}(t)=1}$ both decay exponentially fast, so the difference of the first terms
is negligible. The difference of the second terms can be handled by \cref{lem:SEN,lem:SEN2} and the bounds \eqref{eq:nkbound}, \eqref{eq:Ntbound} on $\EE{N_{k^*}(t)}$,  $\EE{N(t)-1}$ 
(note that the lower bound on $\EE{N(t)-1}$ can be simplified to $t/(2\delta_{\max})$ using $t\ge 2\delta_{\max}$):
introducing $\lambda^*(\beta) = \EE{ D_{k^*,1} \ind{ D_{k^*,1}>\beta } }$ we obtain
\begin{align*}
\MoveEqLeft 
\frac{\EE{(S(t)- (N(t)-1)\mu)^2}}{\EE{N(t)-1}^2} \left(1+\frac{2c}{\sqrt{\EE{N(t)-1}}}\right)^2 
-	\frac{\EE{(S_{k^*}(t)-(N_{k^*}(t)-1)\mu)^2}}{\EE{N_{k^*}(t)-1}^2} \left(1-\frac{2c}{\sqrt{\EE{N_{k^*}(t)-1}}}\right)\\
&\le
 \left(\frac{\delta^* V^*}{t} + \frac{C}{t^{3/2}} + \frac{C' f(t)}{t^2}\right) \left(1+\frac{2c\sqrt{2\delta_{\max}}}{\sqrt{t}}\right)^2 
 -\left(
 	\frac{\delta^* V^*}{t+\beta} - 2B \frac{ \delta^* \sqrt{ \delta^* V^*}}{t^{3/2}} - \frac{V^* \lambda^*(\beta)}{t}
 	\right) \,
  \left(1-\frac{c\sqrt{2\delta^*}}{\sqrt{t}}\right)\\
&=
 \left(\frac{\delta^* V^*}{t} +  \frac{C}{t^{3/2}} + \frac{C' f(t)}{t^2}\right) \left(1+\frac{2c\sqrt{2\delta_{\max}}}{\sqrt{t}}
 + \frac{8c^2\delta_{\max}}{t} \right) 
 -\left(
 	\frac{\delta^* V^*}{t+\beta} - 2B \frac{ \delta^* \sqrt{ \delta^* V^*}}{t^{3/2}} - \frac{V^* \lambda^*(\beta)}{t}
 	\right) \,
  \left(1-\frac{c\sqrt{2\delta^*}}{\sqrt{t}}\right)\\
& \le \frac{\beta \,\delta^* V^*  }{t(t+\beta)} + 
		(c+C'') t^{-3/2}+C''' f(t) t^{-2} + C'''' \lambda^*(\beta) t^{-1}\\
& \le \delta^* V^* \, \beta t^{-2}  + 		
	(c+C'') t^{-3/2}+C''' f(t) t^{-2} + C'''' \lambda^*(\beta) t^{-1}\,,
\end{align*}
where we used that by assumption $c< \sqrt{t/(2\delta^*)}$. 
Using $\EE{N_{k^*}(t)-1} \ge t/(2\delta^*)$, the last term in the bound of \cref{lem:SNconc} for $\EE{(\hmu_{k^*}(t)-\mu)^2}$ becomes $C_1 t^{-2}\EE{ N_{k^*}(t)^2} \Prob{ N_{k^*}(t) > \EE{ N_{k^*}(t)} + c \sqrt{ \EE{N_{k^*}(t)-1}}} $. 
By $\EE{N(t)-1} \ge t/(2\delta_{\max})$, the indicator
in the third line of the bound for $\EE{(\hmu(t)-\mu)^2}$ can be bounded as $\one{\EE{N(t)-1}<4c^2} \le \one{ t < 8 \delta_{\max} c^2}$ which is zero since by assumption $c<\sqrt{t/(8\delta_{max})}$. 
Summing up our bounds and keeping the probability terms gives
\begin{align*}
\EE{ (\hmu(t)-\mu)^2 } - \EE{ (\hmu_{k^*}(t)-\mu)^2 }
& \le  \mu^2 \left\{\Prob{N(t)=1} - \Prob{N_{k^*}(t)=1}\right\}\\
& \qquad 
	+ \delta^* V^* \, \beta t^{-2}  + 		
	(c+C'') t^{-3/2}+C''' f(t) t^{-2} + C'''' \lambda^*(\beta) t^{-1}\\
& \qquad 	+	C_1 t^{-2}\EE{ N_{k^*}(t)^2} \Prob{ N_{k^*}(t) > \EE{ N_{k^*}(t)} + c \sqrt{ \EE{N_{k^*}(t)-1}}}  \\
& \qquad + C_2 \Prob{ N(t) < \EE{ N(t)} - c \sqrt{ \EE{N(t)-1}}}\,,
\end{align*}
thus, finishing the proof. 

\section{KL-Based Confidence Bound on Variance}
\label{sec:apx-klvar}

The following lemma gives a variance estimate based on Kullback-Leibler divergence. 

\begin{lemma}
\label{lem:KL-var-bound}
Let $Y_1,\ldots,Y_{2n}$ and $Q_1,\ldots,Q_n$ be two independent sequences of
independent and identically distributed random variables taking values in $[0,1]$. 
Furthermore,
for any $t=1,\ldots,n$, let
\begin{equation}
\label{eq:simplevarest}
\bvar_{2t} = \frac{1}{2t} \sum_{s=1}^t Q_t (Y_{2s}-Y_{2s-1})^2.
\end{equation}
Then, for any $\delta>0$,
\[
\P\left[\cup_{t=1}^n \left\{ \KL{2\bvar_{2t},2\EE{Q_1}\var[Y_1]} \ge \frac{\delta}{t}\right\} \right] \le 2 e \lceil \delta \log n \rceil e^{-\delta}~.
\]
\end{lemma}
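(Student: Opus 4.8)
The plan is to recognize $2\bvar_{2t}$ as the empirical mean of $t$ i.i.d.\ bounded variables and then invoke a standard ``peeling'' deviation bound. First I would set $Z_s = Q_s (Y_{2s}-Y_{2s-1})^2$ (reading the $Q_t$ in~\eqref{eq:simplevarest} as $Q_s$, which is surely the intended index). Since the $Q$- and $Y$-sequences are independent and $Y_{2s},Y_{2s-1}$ are i.i.d., each $Z_s$ lies in $[0,1]$, the $(Z_s)_s$ are i.i.d., and $\EE{Z_1}=\EE{Q_1}\,\EE{(Y_2-Y_1)^2}=2\EE{Q_1}\var[Y_1]=:p\in[0,1/2]$. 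Hence $2\bvar_{2t}=\hat p_t := \tfrac1t\sum_{s=1}^t Z_s$, and the claim becomes a uniform-in-$t$ bound on $\Prob{\bigcup_{t\le n}\{t\,\KL{\hat p_t,p}\ge\delta\}}$ for the empirical mean of i.i.d.\ $[0,1]$ variables with mean $p$.

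Next I would split the KL event by the sign of $\hat p_t - p$, treating $\{\hat p_t\ge p\}$ and $\{\hat p_t<p\}$ separately; each tail will be bounded by $e\lceil\delta\log n\rceil e^{-\delta}$, and summing them produces the factor $2$. Consider the upper tail. The two ingredients are (i) the Cram\'er/Chernoff bound, using that a $[0,1]$ variable with mean $p$ has moment generating function dominated by that of $\mathrm{Bernoulli}(p)$, so that $M_t^\lambda=\exp(\lambda S_t - t\phi_p(\lambda))$ with $\phi_p(\lambda)=\log(1-p+pe^\lambda)$ and $S_t=t\hat p_t$ is a nonnegative supermartingale with $\EE{M_t^\lambda}\le 1$; and (ii) Doob's maximal inequality $\Prob{\max_{t}M_t^\lambda\ge a}\le 1/a$.

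The core is the peeling. I would cover $\{1,\dots,n\}$ by geometric slices $\mathcal S_k=\{t:\eta^{k-1}\le t<\eta^k\}$ with ratio $\eta=e^{1/\delta}$, so that there are $K=\lceil\log_\eta n\rceil=\lceil\delta\log n\rceil$ slices. On $\mathcal S_k$ let $q_k>p$ solve $\KL{q_k,p}=\delta/\eta^{k}$ (the slice event is vacuous when no such $q_k\le 1$ exists) and let $\lambda_k\ge 0$ be the Cram\'er-optimal parameter at $q_k$. If the event holds for some $t\in\mathcal S_k$, then $\KL{\hat p_t,p}\ge\delta/t>\delta/\eta^{k}=\KL{q_k,p}$, so $\hat p_t\ge q_k$; since the linear map $q\mapsto \lambda_k q-\phi_p(\lambda_k)$ is increasing and equals $\KL{q_k,p}$ at $q_k$, we get $\log M_t^{\lambda_k}=t(\lambda_k\hat p_t-\phi_p(\lambda_k))\ge t\,\KL{q_k,p}\ge \eta^{k-1}\cdot\delta/\eta^{k}=\delta/\eta$. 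Thus Doob yields $\Prob{\exists\, t\in\mathcal S_k \text{ with the slice event}}\le e^{-\delta/\eta}$. With $\eta=e^{1/\delta}$ one checks $\delta/\eta=\delta e^{-1/\delta}\ge\delta-1$ (from $e^{-x}\ge 1-x$), so each slice contributes at most $e^{-(\delta-1)}=e\,e^{-\delta}$; summing over the $\lceil\delta\log n\rceil$ slices and doubling for the two tails gives the stated bound.

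The main obstacle is the constant-chasing in this last step: tuning the slice ratio to $\eta=e^{1/\delta}$ so that the number of slices is exactly $\lceil\delta\log n\rceil$ while the per-slice exponent $\delta/\eta$ still exceeds $\delta-1$, and verifying that a single fixed $\lambda_k$ lower-bounds $\log M_t^{\lambda_k}$ uniformly across the slice. The reduction to $\hat p_t$ and the sign-split are routine; everything hinges on landing the peeling exactly on $2e\lceil\delta\log n\rceil e^{-\delta}$.
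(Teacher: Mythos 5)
Your reduction is exactly the paper's: define $Z_s = Q_s(Y_{2s}-Y_{2s-1})^2$ (your reading of the index typo is the intended one), observe that the $Z_s$ are i.i.d., $[0,1]$-valued, with $\EE{Z_1}=2\EE{Q_1}\var[Y_1]$, so that $2\bvar_{2t}$ is their empirical mean, and then apply a uniform-in-$t$ KL deviation bound. The only difference is what happens next: the paper simply cites Equation (5) of Garivier (2013) for the bound $\P\left[\cup_{t=1}^n \{\KL{\bar Z_t,\EE{Z_1}}\ge \delta/t\}\right]\le 2e\lceil\delta\log n\rceil e^{-\delta}$, whereas you supply a self-contained proof of that inequality via the exponential supermartingale, Doob's maximal inequality, and geometric peeling with ratio $\eta=e^{1/\delta}$ — which is in substance the argument behind the cited result. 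Your peeling details check out (the monotonicity of $q\mapsto\lambda_k q-\phi_p(\lambda_k)$ to convert $\hat p_t\ge q_k$ into a lower bound on $\log M_t^{\lambda_k}$, and $\delta e^{-1/\delta}\ge\delta-1$ to land on the factor $e\,e^{-\delta}$ per slice), up to the edge cases you flag yourself, e.g.\ the slice count being $\lceil\delta\log n\rceil$ rather than $\lceil\delta\log n\rceil+1$ when $\delta\log n$ is an integer. So the proposal is correct; it buys self-containedness at the cost of the constant-chasing that the citation outsources.
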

\begin{proof}
Equation (5) of \citet{garivier13} states that if $Z_1,\ldots,Z_n$ are independent, identically distributed random variables taking values in  $[0,1]$ and
$\bar{Z}_t=\frac{1}{t}\sum_{s=1}^t Z_s$ for $t=1,\ldots,n$, then for any $\delta>0$,
\[
\P\left[\cup_{t=1}^n \left\{ \KL{\bar{Z}_t, \EE{Z_1}} \ge \frac{\delta}{t}\right\} \right] \le 2 e \lceil \delta \log n \rceil e^{-\delta}~.
\]
Defining $Z_t=Q_t(Y_{2t}-Y_{2t-1})^2$, we see that the above conditions on $Z_t$ are satisfied since they are clearly independent and identically distributed for $t=1,\ldots,n$, and
$Z_t \in [0,1]$ since $0 \le Q_t, Y_{2t-1}, Y_{2t} \le 1$. Now the
statement of the lemma follows since
$\EE{Z_t}=\EE{Q_t(Y_{2t}-Y_{2t-1})^2}=\EE{Q_t}\EE{((Y_{2t}-\EE{Y_{2t}}) - (Y_{2t-1} - \E{Y_{2t-1}}))^2}=2 \EE{Q_1}\var[Y_1]$.
\end{proof}
\begin{corollary}
\label{cor:kl-var-bound}
Let $Y_1,\ldots,Y_{2n}$ and $Q_1,\ldots,Q_n$ be two independent sequences of
independent and identically distributed random variables taking values in $[0,1]$. 
For any $t=2,\ldots,n$,
let $\bvar_t$ be defined by \eqref{eq:simplevarest} if $t$ is even, and let $\bvar_t=\bvar_{t-1}$ if $t$ is odd.
Furthermore, let
\[
\bvar_{t,min}=\inf\{ \mu: \KL{2\bvar_t, 2\mu} \le \delta/\lfloor t/2\rfloor \} 
\]
and
\[
\bvar_{t,max}=\sup\{ \mu: \KL{2\bvar_t, 2\mu} \le \delta/\lfloor t/2 \rfloor \}.
\]
Then for any $\delta>0$, with probability at least $1-2 e \lceil \delta \log \lfloor n/2 \rfloor \rceil e^{-\delta}$,
\[
\max_{2\le t \le n}\bvar_{t,min} \le \EE{Q}\var[Y] \le \min_{2\le t \le n} \bvar_{t,max}\,.
\]
\end{corollary}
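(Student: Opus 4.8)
The plan is to obtain the corollary as an essentially mechanical consequence of \cref{lem:KL-var-bound}, via three steps: a reindexing that collapses odd indices onto even ones, an application of the lemma with a rescaled horizon, and a complementation argument. The probabilistic content lives entirely in the lemma (and, through it, in the maximal KL-inequality of \citet{garivier13}); the corollary only repackages it.

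First I would note that, for every $t=2,\ldots,n$, the quantities appearing in the definitions of $\bvar_{t,min}$ and $\bvar_{t,max}$ depend on $t$ only through $m:=\lfloor t/2\rfloor$. Indeed, for even $t=2m$ we have $\bvar_t=\bvar_{2m}$ directly from \eqref{eq:simplevarest}, while for odd $t=2m+1$ we have $\bvar_t=\bvar_{t-1}=\bvar_{2m}$ by definition; in both cases the threshold is $\delta/\lfloor t/2\rfloor=\delta/m$. Hence $\bvar_{t,min}=\bvar_{2m,min}$ and $\bvar_{t,max}=\bvar_{2m,max}$, and as $t$ runs over $\{2,\ldots,n\}$ the index $m$ runs over $\{1,\ldots,\lfloor n/2\rfloor\}$. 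The odd indices merely duplicate the confidence sets attached to the even ones and introduce nothing new, so it suffices to control the $\lfloor n/2\rfloor$ even-indexed estimators.

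Next I would apply \cref{lem:KL-var-bound} with $n$ replaced by $\lfloor n/2\rfloor$, which is legitimate since only $\bvar_2,\ldots,\bvar_{2\lfloor n/2\rfloor}$ are ever used and the data $Y_1,\ldots,Y_{2\lfloor n/2\rfloor}$, $Q_1,\ldots,Q_{\lfloor n/2\rfloor}$ are available. This gives
\[
\P\!\left[\,\cup_{m=1}^{\lfloor n/2\rfloor}\!\left\{\KL{2\bvar_{2m},\,2\,\EE{Q}\var[Y]}\ge\tfrac{\delta}{m}\right\}\right]\le 2e\lceil\delta\log\lfloor n/2\rfloor\rceil e^{-\delta},
\]
which is exactly the failure probability claimed. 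On the complementary event, writing $w:=\EE{Q}\var[Y]$, we have $\KL{2\bvar_{2m},2w}<\delta/m$ for every $m$; in particular $w$ satisfies $\KL{2\bvar_{2m},2w}\le\delta/m$, so $w$ lies in the set $\cset{\mu}{\KL{2\bvar_{2m},2\mu}\le\delta/m}$ over which $\bvar_{2m,min}$ and $\bvar_{2m,max}$ are the infimum and the supremum, respectively. Therefore $\bvar_{2m,min}\le w\le\bvar_{2m,max}$ for all $m=1,\ldots,\lfloor n/2\rfloor$.

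Finally, by the reindexing of the first step this reads $\bvar_{t,min}\le w\le\bvar_{t,max}$ for all $t=2,\ldots,n$, and taking the maximum over the lower endpoints and the minimum over the upper endpoints yields $\max_{2\le t\le n}\bvar_{t,min}\le w\le\min_{2\le t\le n}\bvar_{t,max}$ on this event, as required. I expect no genuine difficulty here; the only place demanding care is the index bookkeeping, namely verifying that the floor-based threshold $\delta/\lfloor t/2\rfloor$ matches the lemma's $\delta/m$ under both parities and confirming that the substitution $n\mapsto\lfloor n/2\rfloor$ is precisely what produces the stated $\log\lfloor n/2\rfloor$ factor.
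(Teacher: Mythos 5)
Your proof is correct and is exactly the argument the paper intends (the paper leaves the corollary's proof implicit as an immediate consequence of \cref{lem:KL-var-bound}): apply the lemma with horizon $\lfloor n/2\rfloor$, note that on the complement event $\EE{Q}\var[Y]$ lies in each KL-confidence set so it is sandwiched between the infimum and supremum, and observe that odd $t$ merely duplicate the even-indexed sets since $\bvar_t=\bvar_{2\lfloor t/2\rfloor}$ and the threshold depends on $t$ only through $\lfloor t/2\rfloor$. The index bookkeeping you flag is handled correctly.
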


\section{Details on Synthetic Experiments}
\label{sec:appendix-toy}

In considering alternative bounded payout distributions we evaluated
3 natural choices, truncated normal, uniform, and
\emph{scaled-Bernoulli}.  The latter non-standard distribution is a
transformation of a Bernoulli described by three parameters, a
midpoint $m \in \R$, a scale $s \in \R^+$, as well as a Bernoulli
parameter $p$.  Specifically, if $X \in \{0,1\}$ is a Bernoulli random variable 
with parameter $p$ then the transformation $m + (X -
0.5)s$ provides the corresponding scaled-Bernoulli sample.  We
conducted experiments using these different distributions, keeping
their variances the same, in an attempt to ascertain whether the shape
of the distribution might affect the performance of the various bandit
allocation algorithms.  However, we were unable to uncover an instance
where the shape of the distribution had a non-negligible effect on
performance. As a result, we decided to conduct our experiments using
scaled-Bernoulli distributions as they permit the maximum range for
the variance in an bounded interval.

\section{The Cox-Ingersoll-Ross Model}
\label{sec:appendix-finance-example}

In the Cox-Ingersoll-Ross
(CIR) model the interest rate at time $t$, denoted $r(t)$, follows a
\emph{square-root diffusion} given by a stochastic differential
equation of the form
\[
dr(t) = (\eta - \kappa r(t))dt + \sigma\sqrt{r(t)}dW(t)
,
\]
where $\eta$, $\kappa$ and $\sigma$ are fixed, 
problem-specific, constants and $W$
is a standard one-dimensional Brownian motion.  The payoff of an
option for this rate at time $T$ (maturity) is given as
\[
M\max(r(T) - K, 0)
,
\]
where the strike price $K$ and nominee amount $M$ are parameters of
the option.  The actual quantify of interest, the price of an option
at time $0$, is given as
\[
\bar P \defined \E\left[\exp(-\int_0^T r(t)dt)M\max(r(T) - K, 0)\right]
.
\]
The above integrals may approximated with the Monte Carlo method by
first discretizing the interest rate trajectory, $r(t)$, into $n$
points $r_{1:n}$ and simulating
\[
r_t = r_{t-1} + (\eta - \kappa r_{t-1})\frac{T}{n} + \sigma\sqrt{r_{t-1}\frac{T}{n}}\eps_t
,
\]
where $\eps_t \sim \mathcal{N}(0,1)$, and $r_0$ is given.  Given a set
of $N$ sampled trajectories, $r_{1:n}^{(1:N)}$, we may let
\[
p_i \defined 
\exp\left(
-\frac{T}{n}
\left(\frac{r_1^{(i)} + r_n^{(i)}}{2} + \sum_{t=1}^n
r_t^{(i)}\right)
\right) 
\;
M\max(r_n^{(i)} - K, 0)
,
\]
and approximate using $\bar P \approx \frac{1}{N}\sum_{i=1}^N p_i$. 

An important detail is that the value of the option is exactly zero
when the interest rate is below the strike price at maturity.
Consequently, when it comes to Monte Carlo simulation, we are less
interested in simulating trajectories with lower interest rates.  A
standard way of exploiting this intuition is an importance sampling
variant known as \emph{exponential twisting} (see
\citep{glasserman2003}).  Here, instead of sampling the noise $\eps_t$
directly from the \emph{target density}, $\mathcal{N}(0,1)$, one uses
a skewed \emph{proposal density}, $\mathcal{N}(\theta,1)$, defined by
a single \emph{drift} parameter, $\theta \in \R$. Deriving the
importance weights, $w_i \defined \exp(-\sum_{t=1}^n \epsilon_t^{(i)}
- \frac{n\theta^2}{2})$, then, we arrive with the unbiased
approximation $\bar P \approx \frac{1}{N}\sum_{i=1}^N w_ip_i$.

\section{Bayesian Logistic Regression Example}
\label{sec:appendix-lr-example}

We consider a standard Bayesian logistic regression model using a
multivariate Gaussian prior. 
To set up the model, 
for $y\in \{0,1\}$, $x\in \R^d$, $\theta\in \R^d$, 
let $p(y|x,\theta) = \sigma(\theta^\top x)^y (1-\sigma(\theta^\top x))^{1-y}$, 
where
$\sigma$ denotes the logit function
and let $p(\theta) = \mathcal{N}(\theta; 0,0.05I)$, i.e., the density of $d$ dimensional
Gaussian with mean $0$ and covariance $0.05 I$.
The labeled training examples $(x_1,y_1),\ldots,(x_T,y_T)$ 
are assumed to be generated as follows:
First, $\theta_* \sim p(\cdot)$ is chosen. The sequence 
$(x_1,y_1),\ldots,(x_T,y_T)$ 
is assumed to be i.i.d. given $\theta_*$ and the labels are assumed to satisfy
\[
\Prob{ y_t = 1|x_t,\theta_*} = p(y_t|x_t,\theta_*)\,,
\]
i.e., $y_t \sim \mathrm{Ber}( \sigma(\theta_*^\top x_t) )$.
The posterior distribution of $\theta_*$ given $x_{1:T}$ and $y_{1:T}$ then follows
\[
p_T(\theta) \defined 
\frac{1}{\mathcal{Z}} p_0(\theta) \prod_{t=1}^T p(y_t|x_t,\theta)\,
\]
where 
\if0
Specifically, in the following generative
model, let the $d$-dimensional feature vector $x \in \R^d$ denote a
single classification example.

The corresponding label for this
example, denoted by the random variable $Y \in \{0,1\}$, is then
assumed to be generated according to 
$\Prob{Y = 1|x} =\sigma(\theta^Tx)$, w
here $\sigma$ denotes the logit function and the
random variable $\theta \in \R^d$ the (unknown) parameters of the
model.  Additionally, we assume that these $\theta$ parameters are
initially distributed from a zero-mean multivariate Gaussian prior
with variance $0.05$, that is $P(\theta) \defined
\mathcal{N}(\theta; 0,0.05I)$.

Given a set of i.i.d. labeled training examples $\{x_t,y_t\}$, for
$t=1,...,T$, we then have the resulting posterior distribution
$$
P(\theta|x_{1:T}) = \frac{1}{\mathcal{Z}} \prod_{t=1}^T P(Y =
y_t|x_t,\theta)P(\theta)
.
$$
 Where 
 \fi
 the normalization constant is evaluated by integrating out
 $\theta$, that is
\[
\mathcal{Z} = \int p_0(\theta)  \prod_{t=1}^T p(y_t|x_t,\theta) \,\,d\theta
.
\]
In our demonstrations we consider approximating this integral using
\MC integration by drawing i.i.d. samples over $\theta_{i}$ for $i=1,...,N$ using
annealed importance sampling.  To do so we first define an annealed
target distribution 
$$
f_j(\theta) = f_0(\theta)^{\beta_j} f_n(\theta)^{1-\beta_j}
$$
for some $1 = \beta_{1} > \beta_2 > ... > \beta_n = 0$, where 
$f_n(\theta) = p_0(\theta)$, and posterior $f_0(\theta)
= p_T(\theta)$.  The $\beta_j$ values
are set using the \emph{power of 4} heuristic suggested by
\citep{kuss2005}, that is, given a fixed number of annealing steps $n$
we let $\beta_{1:n} = \{0, \frac{1}{n-1}, \frac{2}{n-1}, ..., 1\}^4$.

Additionally, AIS requires that we specify a sequence of MCMC
transitions to use at each annealing step.
For this we found that
slice sampling \citep{neal2003slice} moves were most effective
(compared to Metropolis-Hastings, Langevin, and Hamiltonian MCMC
\citep{Neal2010} moves) in additional to being effectively
parameter-free. If we let $T_i(x,x')$ denote the slice sampling
transition operator that meets detailed balance w.r.t. target $f_j$,
then the AIS algorithm is given by the following procedure:
\begin{itemize}
\item generate $\theta^{(n-1)} \sim f_n(\theta)$;
\item generate $\theta^{(n-2)}$ from $\theta^{(n-1)}$ using $T_{n-1}$;
\item ...
\item generate $\theta^{(1)}$ from $\theta^{(2)}$ using $T_{2}$;
\item generate $\theta^{(0)}$ from $\theta^{(1)}$ using $T_{1}$.
\end{itemize}
Defining 
$$
w = \frac{f_{n-1}(\theta^{(n-1)})}{f_{n}(\theta^{(n-1)})}\frac{f_{n-2}(\theta^{(n-2)})}{f_{n-1}(\theta^{(n-2)})}\cdots \frac{f_{1}(\theta^{(1)})}{f_{2}(\theta^{(1)})}\frac{f_{0}(\theta^{(0})}{f_{1}(\theta^{(0)})}
$$
we get that $\mathcal{Z} = \EE{ w}$.
\if
However, since we are using multiple estimators to allocate samples to
the different AIS parameterizations we would of course use the
combined estimator given in \cref{sec:uniformcost}. 
\fi

\end{document}